\documentclass[journal]{IEEEtran}

\usepackage{amsmath,amssymb,amsthm,mathtools}
\usepackage{graphicx}
\usepackage{booktabs}
\usepackage{multirow}
\usepackage{algorithm}
\usepackage{algpseudocode}
\usepackage{cite}
\usepackage{url}
\usepackage{array}
\usepackage{enumitem}
\usepackage{microtype}

\newtheorem{theorem}{Theorem}
\newtheorem{proposition}{Proposition}
\newtheorem{lemma}{Lemma}
\newtheorem{corollary}{Corollary}
\theoremstyle{remark}
\newtheorem{remark}{Remark}

\newcommand{\R}{\mathbb{R}}
\newcommand{\ones}{\mathbf{1}}
\newcommand{\diag}{\operatorname{Diag}}
\newcommand{\KL}{D_{\rm KL}}
\newcommand{\PO}{\mathcal P_{\Omega}}

\title{Exact Rank-Space KL Projection for Shared-Marginal Low-Rank Factors: Application to Doubly Stochastic Clustering}

\author{Enliang Hu \\ {ynel.hu@gmail.com}}

\begin{document}
\maketitle

\begin{abstract}
We study exact Kullback--Leibler (KL) projection for low-rank factorizations whose two nonnegative factors have prescribed row marginals and a shared, learned column marginal. For arbitrary positive row marginals of equal total mass, the joint KL projection reduces exactly to a strictly convex gauge-fixed dual with only $r-1$ effective variables; its Hessian is a sum of categorical covariance terms and admits $O((n+m)r)$ matrix-free Hessian--vector products. The projection theorem is objective-independent. We then specialize this geometry to doubly stochastic (DS) graph learning through $W=U\operatorname{Diag}(g)^{-1}V^\top$, where row-simplex factors with a common column mass induce an exactly DS graph without materializing an $n\times n$ optimization variable. Combined with observed-edge sparse fitting, a stochastic anchor-reduced manifold regularizer, and Bregman backtracking, the resulting mirror-descent method preserves exact feasibility at every accepted step. Under a nonvanishing latent-mass condition, it satisfies sufficient decrease and an $O(1/N)$ mirror-stationarity bound, while strictly positive accumulation points are KKT stationary. Matched clustering experiments show competitive accuracy, feasibility residuals near numerical precision, and favorable anytime behavior without a dense learned graph.
\end{abstract}

\begin{IEEEkeywords}
KL projection, low-rank factorization, shared marginals, doubly stochastic clustering, mirror descent, sparse graph learning
\end{IEEEkeywords}

\section{Introduction}
\label{sec:intro}
Graph-based clustering converts pairwise relations into a similarity graph and seeks a partition consistent with its geometry. Spectral methods established the importance of graph normalization and Laplacian structure~\cite{ShiMalik2000,NgJordanWeiss2001,Luxburg2007}. A persistent difficulty is degree imbalance: a few high-degree samples can dominate the graph while weakly connected samples contribute little. Doubly stochastic (DS) similarities alleviate this imbalance by requiring a nonnegative graph to have unit row and column sums. Matrix scaling provides the classical computational basis~\cite{Sinkhorn1964,SinkhornKnopp1967,Knight2008}, and DS normalization has long been useful for spectral clustering~\cite{Zass2006}.

The main obstacle is computational. Direct DS graph-learning models optimize an explicit $n\times n$ matrix, which gives a transparent representation but incurs quadratic state and repeated dense updates~\cite{Wang2025SDSGC}. Low-rank formulations avoid this bottleneck. DCD, for example, represents the graph through a single nonnegative factor and achieves linear-in-$n$ factor storage for fixed rank~\cite{Yang2016DCD}. The price is structural: the same factor appears on both sides of the induced similarity, so representation and optimization are self-coupled. General two-factor factorizations remove that coupling, but $UV^\top$ is not doubly stochastic merely because $U,V\ge0$. Penalty enforcement or post-hoc scaling can restore stochasticity approximately or after the update, but then feasibility is no longer built into the factor geometry.

This leaves a specific design gap. We seek a low-rank clustering model that simultaneously provides four properties: exact DS feasibility throughout optimization, freedom for two latent factors to differ, fitting restricted to observed sparse affinities, and scalable manifold regularization. No single property is difficult in isolation; the challenge is to obtain them together without reintroducing an $n\times n$ state or a large constrained subproblem.

We address this gap with the shared-marginal factorization
\begin{equation}
\label{eq:intro-factor}
W=U\diag(g)^{-1}V^\top,
\end{equation}
subject to
\begin{equation}
\label{eq:intro-constraints}
U\ones=V\ones=\ones,
\qquad U^\top\ones=V^\top\ones=g,
\qquad U,V\ge0.
\end{equation}
The constraints in \eqref{eq:intro-constraints} are only linear equalities and nonnegativity, hence the factor feasible set is convex. For every feasible triplet with $g>0$,
\[
W\ones=U\diag(g)^{-1}g=\ones,
\qquad
W^\top\ones=V\diag(g)^{-1}g=\ones,
\]
so exact DS feasibility follows directly from factor feasibility. The shared mass $g$ is learned rather than prescribed, while $U$ and $V$ remain distinct and can encode complementary latent roles.

The factorization itself has important antecedents in low-rank optimal transport. Entropic OT made Sinkhorn scaling a central primitive~\cite{Cuturi2013}; iterative KL/Bregman projections provide a general constrained-transport viewpoint~\cite{Benamou2015,PeyreCuturi2019}; and factored couplings introduced nonnegative factors linked through a common marginal~\cite{Forrow2019,Scetbon2021,Halmos2024FRLC}. Most recently, Jawanpuria and Mishra study the same balanced factor constraints with prescribed row marginals as a Riemannian manifold~\cite{Jawanpuria2026RLOT}. Their balanced nonlinear KL retraction is computed by cyclic Bregman projections, whereas an $(r-1)$-dimensional Schur complement appears in a tangent-space Fisher--Rao projector. We therefore do not claim the shared-marginal factorization, mirror descent, or the mere existence of rank-sized linear systems as new. Our main geometric result concerns the nonlinear KL projection itself: after eliminating row multipliers and the shared column mass, the complete projection is one strictly convex $(r-1)$-dimensional dual. We then specialize this objective-independent operator to sparse DS clustering. LoRD/B-LoRD form a close clustering line, using a class-probability parameter to obtain a convex linear one-factor DS constraint and projected optimization~\cite{Lyu2026LoRD}. Our graph model instead retains two factors, learns their common latent mass, and preserves exact feasibility inside every accepted mirror step.

Two additional choices make the model suitable for sparse graph data. First, absent edges in a $k$-NN graph should not automatically be treated as equally strong negative observations. We therefore fit only the observed support, following the general principle of observed-entry objectives in incomplete-matrix estimation~\cite{Mazumder2010}. Second, local affinities alone may not capture broader geometry. A sparse sample-to-anchor matrix $Z$ defines
\[
L_a=I-ZD_a^{-1}Z^\top,
\]
which is applied without materializing either the reduced similarity or the Laplacian. The row-stochastic anchor construction gives a nonnegative stochastic PSD similarity and the bound $0\preceq L_a\preceq I$, while each action costs $O(nqr)$ when every sample uses at most $q$ anchors~\cite{LiuHeChang2010,ChenCai2011}. Nystr\"om approximation is retained as an empirical alternative~\cite{WilliamsSeeger2001,Fowlkes2004}.

The remaining difficulty is optimization. Although the feasible factor set is convex, the graph objective is nonconvex because $U$, $V$, and $g$ interact multiplicatively. We first step away from clustering and consider arbitrary positive prescribed row marginals $a$ and $b$ of equal total mass. The exact joint KL projection onto the corresponding shared-marginal factor polytope has only $r-1$ effective dual variables after gauge fixing; its Hessian is a sum of weighted covariance terms and is positive definite on the reduced space. The same dual admits matrix-free Hessian--vector products in $O((n+m)r)$ time. This theorem depends only on the factor constraints and KL geometry, not on the clustering loss or anchor regularizer. The DS clustering feasible set is the special case $a=b=\mathbf1$, so the same operator preserves exact feasibility inside every accepted mirror step rather than repairing the graph afterward.

The contributions are as follows.
\begin{itemize}[leftmargin=1.25em]
\item \textbf{Exact DS structure in a convex two-factor geometry.} Proposition~\ref{prop:shared-marginal-geometry} formalizes the model-level property: linear row-simplex/shared-marginal constraints define a convex factor polytope, and every feasible triplet with positive latent mass induces an exactly doubly stochastic graph without imposing $U=V$. The same representation admits an optional symmetric DS output of rank at most $2r$.
\item \textbf{An objective-independent rank-space KL projection.} Theorem~\ref{thm:uv-low-dimensional-kl} treats arbitrary positive row marginals of equal total mass and shows that the complete joint KL projection reduces to an $(r-1)$-dimensional strictly convex dual. The reduced Hessian is SPD after gauge fixing; explicit Newton costs $O((n+m)r^2+r^3)$ per evaluation/solve, while a Hessian--vector product costs only $O((n+m)r)$.
\item \textbf{Feasible descent with a quantitative stationarity guarantee.} Theorem~\ref{thm:uvg-stationarity} proves finite backtracking, exact feasibility, Bregman sufficient decrease, asymptotic regularity, and an $O(1/N)$ mirror-stationarity bound. Corollary~\ref{cor:uvg-kkt} then identifies every strictly positive accumulation point as KKT stationary under the same nonvanishing-mass condition.
\item \textbf{Sparse manifold modeling and controlled empirical comparison.} Observed-edge fitting and a stochastic anchor-reduced Laplacian avoid dense learned graph states, while the experiments compare representative full-graph, one-factor low-rank, and shared-marginal alternatives under a unified clean/matched protocol. Ablations isolate the reduced geometry, factor-side choice, exact projection, and stationarity behavior.
\end{itemize}

The main text gives complete theorem statements together with concise proof sketches. Appendix~\ref{app:kl-proof} contains the full proof of the exact rank-space KL projection theorem; longer proofs of the auxiliary structural and convergence statements are omitted for brevity.

\section{Related Work}
\label{sec:related}
\subsection{Spectral and doubly stochastic graph clustering}
Spectral clustering links partitioning to the eigenspaces of graph Laplacians. Normalized cuts~\cite{ShiMalik2000}, the normalized spectral algorithm of Ng, Jordan, and Weiss~\cite{NgJordanWeiss2001}, and later tutorials~\cite{Luxburg2007} established the main normalization principles used in graph clustering. DS normalization goes further by balancing both row and column masses. Its computational basis is matrix scaling~\cite{Sinkhorn1964,SinkhornKnopp1967}, for which convergence and numerical properties have been studied extensively~\cite{Knight2008}. Zass and Shashua used DS approximation directly as a spectral-clustering normalization~\cite{Zass2006}. Learned-graph formulations subsequently imposed DS and clustering structure jointly; a KDD formulation learned a structured DS matrix with a low-rank Laplacian constraint~\cite{WangNieHuang2016}, while the more recent SDSGC combines a full DS graph with spectral structure~\cite{Wang2025SDSGC}. These direct models retain an interpretable graph variable but generally require $O(n^2)$ state storage.

\subsection{Low-rank graph factorization for clustering}
Low-rank nonnegative factorizations are closely related to clustering and spectral objectives~\cite{DingHeSimon2005,KuangDingPark2012}. DCD specializes this idea to DS clustering through a Data--Cluster--Data random walk and a one-factor nonnegative low-rank decomposition~\cite{Yang2016DCD}. Its storage is linear in $n$ for fixed rank, but the repeated factor in the induced similarity creates self-coupling and leads naturally to multiplicative MM updates. More recently, LoRD and B-LoRD revisit low-rank DS clustering from a kernel $k$-means perspective and introduce a class-probability parameter that turns their one-factor DS constraint into a convex linear factor constraint~\cite{Lyu2026LoRD}. Our setting is different: two factors share a learned latent mass, the factors need not coincide, and feasibility is enforced by a joint KL projection whose effective dimension is $r-1$.

\subsection{Low-rank transport, matrix scaling, and Bregman projections}
Entropic OT connects transport computation to Sinkhorn scaling~\cite{Cuturi2013,PeyreCuturi2019}, while iterative Bregman projections provide a general KL-projection framework for intersections of affine marginal constraints~\cite{Benamou2015}. Factored couplings introduced a common intermediate marginal~\cite{Forrow2019}; low-rank Sinkhorn later optimized such factors with LR-Dykstra-style KL projections and proved non-asymptotic stationarity~\cite{Scetbon2021}. FRLC uses a more flexible latent-coupling factorization and coordinate mirror descent~\cite{Halmos2024FRLC}.

The closest geometric antecedent is the 2026 Riemannian formulation of balanced low-rank OT~\cite{Jawanpuria2026RLOT}. It uses the same constraint pattern $U\mathbf1=a$, $V\mathbf1=b$, and $U^\top\mathbf1=V^\top\mathbf1$. Its balanced nonlinear KL retraction is computed through cyclic Bregman projections over the three affine constraint families; separately, its tangent-space Fisher--Rao projector reduces to an $(r-1)$-dimensional Schur-complement system. Our novelty boundary is narrower and different: Theorem~\ref{thm:uv-low-dimensional-kl} eliminates the row multipliers and shared mass in the \emph{nonlinear KL projection itself}, yielding one strictly convex $(r-1)$-dimensional dual whose minimizer is the exact joint projection. Thus we claim neither shared-marginal factors nor an $(r-1)$ tangent linear system in isolation. The clustering contribution is the use of this direct projection as the finite-iterate feasibility operator for a sparse nonconvex DS graph objective.

\subsection{Mirror descent and non-Euclidean first-order methods}
Mirror descent replaces Euclidean projection by a geometry induced by a distance-generating function; entropy is especially natural on simplices~\cite{BeckTeboulle2003}. Bregman proximal-gradient analyses extend descent arguments beyond ordinary Euclidean Lipschitz-gradient settings~\cite{BauschkeBolteTeboulle2017}, while relative smoothness provides a general first-order framework for non-Euclidean descent~\cite{LuFreundNesterov2018}. Our convergence proof is self-contained and uses a problem-specific Bregman sufficient-decrease line search, but these works provide the optimization context for the KL mirror map and the use of non-Euclidean descent certificates.

\begin{table*}[t]
\centering
\caption{Closest shared-marginal projection mechanisms. The comparison concerns how marginal feasibility is restored, not the surrounding learning objective.}
\label{tab:projection-related}
\small
\setlength{\tabcolsep}{5pt}
\begin{tabular}{p{3.2cm}p{4.0cm}p{4.2cm}p{3.0cm}}
\toprule
Method & factor constraints & nonlinear KL feasibility step & rank-space structure \\
\midrule
Low-rank Sinkhorn~\cite{Scetbon2021} & prescribed transport marginals + shared $g$ & LR-Dykstra / alternating KL projections & $g\in\R_+^r$ kept explicitly \\
Riemannian low-rank OT~\cite{Jawanpuria2026RLOT} & $U\ones=a$, $V\ones=b$, shared column sums & cyclic Bregman projection for balanced retraction & $(r-1)$ Schur system in tangent projector \\
\textbf{This work} & same general balanced factor polytope; DS is $a=b=\ones$ & \textbf{one exact joint KL projection via a strictly convex dual} & \textbf{$r-1$ nonlinear dual variables; HVP $O((n+m)r)$} \\
\bottomrule
\end{tabular}
\end{table*}

\subsection{Sparse fitting and reduced manifold graphs}
Sparse input similarities naturally suggest losses that distinguish observed edges from unobserved pairs. The use of observed-support objectives has a broad precedent in incomplete-matrix estimation, including Soft-Impute-type methods~\cite{Mazumder2010}; we use this principle only as motivation and do not rely on nuclear-norm completion. For scalable manifold modeling, anchor graphs replace full pairwise relations by sparse sample-to-anchor weights and can yield nonnegative PSD graph operators with linear-scale storage~\cite{LiuHeChang2010}. Landmark-based spectral clustering similarly uses a small representative set to obtain linear scaling in the number of samples~\cite{ChenCai2011}. Nystr\"om approximation is another classical route to low-rank kernel and spectral computations~\cite{WilliamsSeeger2001,Fowlkes2004}. Our main method uses the stochastic anchor construction because its reduced similarity is nonnegative, PSD, and exactly stochastic by construction; Nystr\"om is retained as an accuracy-oriented alternative in the ablation study.

The closest methods obtain stochastic structure in different ways and therefore pay different computational or modeling costs. Table~\ref{tab:position} separates the main structural axes. The comparison is intentionally restricted to representative DS formulations and to low-rank Sinkhorn, the closest shared-marginal antecedent; generic NMF and spectral baselines are retained for the empirical study rather than used to define the methodological gap.

\begin{table*}[t]
\centering
\caption{Structural mechanisms in representative doubly stochastic and shared-marginal formulations. ``Exact'' refers to the represented marginal constraints and does not imply global optimality. Low-rank Sinkhorn is an optimal-transport antecedent rather than a clustering baseline.}
\label{tab:position}
\scriptsize
\setlength{\tabcolsep}{3.0pt}
\renewcommand{\arraystretch}{1.15}
\begin{tabular}{p{1.75cm}p{2.05cm}p{2.55cm}p{1.25cm}p{1.25cm}p{1.95cm}p{1.45cm}p{3.05cm}}
\toprule
Method & Learned state & DS / marginal mechanism & Symmetry by model & Two-factor freedom & Latent mass & Sparse-edge objective & Structure beyond stochasticity \\
\midrule
SDSGC~\cite{Wang2025SDSGC}
& full $W$ and spectral factor
& explicit DS constraints optimized jointly by ALM
& Yes
& No
& --
& No
& structured Laplacian / graph connectivity \\
DCD~\cite{Yang2016DCD}
& one factor $P\in\R_+^{n\times r}$
& DS by normalized DCD parameterization
& Yes
& No
& implicit cluster mass
& Yes
& sparse-similarity KL fitting \\
LoRD~\cite{Lyu2026LoRD}
& one factor $V\in\R_+^{n\times r}$
& linear convex factor constraints + projection
& Yes
& No
& prescribed class prior $\mu$
& No$^{\dagger}$
& probabilistic clustering \\
B-LoRD~\cite{Lyu2026LoRD}
& one factor $V\in\R_+^{n\times r}$
& linear convex factor constraints + projection
& Yes
& No
& prescribed class prior $\mu$
& No$^{\dagger}$
& adjustable block-diagonality regularization \\
Low-rank Sinkhorn~\cite{Scetbon2021}
& $Q,R,g$
& exact transport marginals via KL/Dykstra projections
& No, in general
& Yes
& learned shared $g$
& No
& transport-cost geometry \\
\textbf{SMDSC}
& $U,V,g$ plus sparse anchors
& exact DS at every accepted iterate via joint KL projection
& Not forced; optional symmetrization
& Yes
& learned shared $g$
& Yes
& stochastic anchor-reduced manifold \\
\bottomrule
\end{tabular}
\vspace{1mm}
\parbox{0.97\textwidth}{\footnotesize $^{\dagger}$LoRD/B-LoRD can exploit a sparse input $S$ computationally, but their Frobenius objective is not an observed-edge-only loss. For SMDSC, $W$ need not be symmetric because the two factors are deliberately allowed to differ. If a symmetric graph is required, $W_s=(W+W^\top)/2$ remains nonnegative and doubly stochastic, with $\operatorname{rank}(W_s)\le 2r$.}
\end{table*}

Table~\ref{tab:position} highlights the intended niche of SMDSC. SDSGC keeps rich structure in an explicit graph; DCD and LoRD/B-LoRD obtain a compact symmetric graph through a single factor; low-rank Sinkhorn has two factors and a learned shared marginal but optimizes an OT cost. SMDSC combines the two-factor shared-marginal geometry with an observed-edge clustering loss and an anchor-reduced manifold term. The distinction is not that the other methods lack stochastic structure, but that they obtain it through a different representation or pay for it in a different part of the algorithm.

\section{Proposed Method}
\label{sec:method}

\subsection{Shared-marginal low-rank doubly stochastic factorization}
\label{sec:shared-marginal-model}
Shared-marginal factorizations are also used in low-rank optimal transport~\cite{Forrow2019,Scetbon2021}; here we use the geometry for sparse graph clustering rather than transport-cost minimization. Let $U,V\in\mathbb R_+^{n\times r}$ and $g\in\mathbb R_+^r$. We parameterize the learned graph as
\begin{equation}
\label{eq:uvg-graph}
W(U,V,g):=U\operatorname{Diag}(g)^{-1}V^\top,
\end{equation}
subject to
\begin{equation}
\label{eq:uvg-marginals}
U\mathbf 1_r=V\mathbf 1_r=\mathbf 1_n,
\qquad
U^\top\mathbf 1_n=V^\top\mathbf 1_n=g.
\end{equation}
\begin{proposition}[Convex shared-marginal geometry and exact DS feasibility]
\label{prop:shared-marginal-geometry}
Let
\begin{equation}
\label{eq:triplet-feasible-set}
\begin{aligned}
\mathcal F:=\{(U,V,g)\ge0:\;&
U\mathbf1_r=V\mathbf1_r=\mathbf1_n,\\
&U^\top\mathbf1_n=V^\top\mathbf1_n=g\}.
\end{aligned}
\end{equation}
Then $\mathcal F$ is a convex polytope. For every $(U,V,g)\in\mathcal F$ with $g>0$, the matrix $W=U\operatorname{Diag}(g)^{-1}V^\top$ is nonnegative and doubly stochastic, with $\operatorname{rank}(W)\le r$. Moreover,
\begin{equation}
\label{eq:sym-output}
W_s:=\frac12(W+W^\top)
\end{equation}
is nonnegative, symmetric, and doubly stochastic, with $\operatorname{rank}(W_s)\le2r$.
\end{proposition}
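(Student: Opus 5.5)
The plan is to split the statement into three independent claims and verify each directly from the linear constraints \eqref{eq:uvg-marginals}; no earlier result beyond the definitions is needed.

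\emph{Polytope.} I will write $\mathcal F$ as the intersection of the nonnegative orthant in $\R^{n\times r}\times\R^{n\times r}\times\R^r$ with the affine subspace cut out by $U\ones_r=\ones_n$, $V\ones_r=\ones_n$, $U^\top\ones_n-g=0$ and $V^\top\ones_n-g=0$. Each of these is a linear equality in the stacked variable $(U,V,g)$. Hence $\mathcal F$ is a polyhedron and in particular convex. For boundedness I will argue entrywise. The row-sum constraint with $U\ge0$ forces $0\le U_{ik}\le1$, and likewise $0\le V_{ik}\le 1$. Then $g=U^\top\ones_n$ gives $0\le g_k\le n$. So $\mathcal F$ is a bounded polyhedron, i.e.\ a polytope. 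It is also nonempty: the point $U=V=\frac1r\ones_n\ones_r^\top$, $g=\frac nr\ones_r$ is feasible, with $g>0$.

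\emph{DS property and rank.} Fix $(U,V,g)\in\mathcal F$ with $g>0$, so that $\diag(g)^{-1}$ exists and is entrywise nonnegative. Then $W=U\diag(g)^{-1}V^\top$ is a product of nonnegative matrices and is therefore nonnegative. For the row sums, use $V^\top\ones_n=g$ and $\diag(g)^{-1}g=\ones_r$:
\[
W\ones_n=U\diag(g)^{-1}V^\top\ones_n=U\diag(g)^{-1}g=U\ones_r=\ones_n.
\]
For the column sums, use $U^\top\ones_n=g$ instead:
\[
W^\top\ones_n=V\diag(g)^{-1}U^\top\ones_n=V\ones_r=\ones_n.
\]
The rank bound follows because $W$ factors through $\R^r$, so $\operatorname{rank}(W)\le\operatorname{rank}(U)\le r$.

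\emph{Symmetrization.} $W_s$ is nonnegative as an average of two nonnegative matrices, and it is symmetric by construction. Its row and column sums are averages of the corresponding sums of $W$ and $W^\top$, each equal to $\ones_n$. For the rank, I will use subadditivity, $\operatorname{rank}(W+W^\top)\le\operatorname{rank}W+\operatorname{rank}W^\top\le 2r$. Equivalently, one can exhibit the factorization $W_s=\tfrac12[U\;\,V]\,[\diag(g)^{-1}V^\top;\ \diag(g)^{-1}U^\top]$ with inner dimension $2r$.

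There is no real obstacle in this argument. The only point needing care is the "polytope" claim: convexity alone is immediate, and boundedness has to be extracted from the row-simplex constraints as above. I will also note explicitly that $g>0$ is required only for the invertibility of $\diag(g)$, and that it is not part of the definition of $\mathcal F$.
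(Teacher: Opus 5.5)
Your proposal is correct and follows the paper's proof essentially step for step. The paper's argument runs: affine equalities plus nonnegativity give a polyhedron, the row-simplex constraints bound $U,V$ entrywise and hence $g$ by $n$, the identities $W\ones_n=U\diag(g)^{-1}g=\ones_n$ and $W^\top\ones_n=V\diag(g)^{-1}g=\ones_n$ give double stochasticity, and subadditivity of rank handles $W_s$. Your extras are both correct but optional: the explicit feasible point $U=V=\frac1r\ones_n\ones_r^\top$ shows nonemptiness, and the inner-dimension-$2r$ factorization of $W_s$ is an alternative to rank subadditivity.
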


\begin{proof}
The defining constraints are affine equalities plus nonnegativity, so $\mathcal F$ is convex and polyhedral. The row-simplex constraints give $0\le U_{ij},V_{ij}\le1$, and hence $0\le g_j\le n$; therefore $\mathcal F$ is bounded and is a convex polytope. For $g>0$, nonnegativity of $W$ is immediate and
\[
\begin{aligned}
W\mathbf1_n&=U\operatorname{Diag}(g)^{-1}g=\mathbf1_n,\\
W^\top\mathbf1_n&=V\operatorname{Diag}(g)^{-1}g=\mathbf1_n.
\end{aligned}
\]
The rank bound $\operatorname{rank}(W)\le r$ follows from the factorization. Averaging $W$ with $W^\top$ preserves nonnegativity and both stochastic marginals, and subadditivity of rank gives $\operatorname{rank}(W_s)\le2r$.
\end{proof}

For optimization it is convenient to eliminate $g$. Define
\begin{equation}
\label{eq:uv-feasible-set}
\begin{aligned}
\mathcal C:=\{(U,V):\;&U,V\in\mathbb R_+^{n\times r},\;
U\mathbf1_r=V\mathbf1_r=\mathbf1_n,\\
&U^\top\mathbf1_n=V^\top\mathbf1_n\}.
\end{aligned}
\end{equation}
and extend the shared latent mass smoothly to a neighborhood of $\mathcal C$ by
\begin{equation}
\label{eq:gbar}
\bar g(U,V):=\frac12\left(U^\top\mathbf1_n+V^\top\mathbf1_n\right),
\qquad
D_{\bar g}:=\operatorname{Diag}(\bar g).
\end{equation}
On $\mathcal C$, $\bar g=U^\top\mathbf1_n=V^\top\mathbf1_n$.

\subsection{Anchor-reduced manifold regularization}
\label{sec:anchor-reduced-laplacian}
Following scalable anchor/landmark graph constructions~\cite{LiuHeChang2010,ChenCai2011}, let $Z\in\mathbb R_+^{n\times m}$ be a sparse sample-to-anchor matrix with normalized rows,
\begin{equation}
Z\mathbf1_m=\mathbf1_n.
\end{equation}
Define
\begin{equation}
\label{eq:anchor-operator}
\begin{aligned}
d_a&:=Z^\top\mathbf1_n, & D_a&:=\operatorname{Diag}(d_a),\\
S_a&:=ZD_a^{-1}Z^\top, & L_a&:=I_n-S_a.
\end{aligned}
\end{equation}
where zero-mass anchors are removed so that $d_a>0$. The matrix $S_a$ is never formed explicitly.

\begin{proposition}[Structure of the anchor-reduced Laplacian]
\label{prop:anchor-laplacian}
Suppose $Z\ge0$, $Z\mathbf1_m=\mathbf1_n$, and $d_a=Z^\top\mathbf1_n>0$. Then
\begin{equation}
S_a=S_a^\top,\qquad S_a\ge0,\qquad S_a\succeq0,
\qquad S_a\mathbf1_n=\mathbf1_n.
\end{equation}
Moreover,
\begin{equation}
0\preceq S_a\preceq I_n,
\qquad
0\preceq L_a\preceq I_n,
\qquad
L_a\mathbf1_n=0.
\end{equation}
Consequently, for
\begin{equation}
\phi(V):=\frac{\mu}{2}\operatorname{tr}(V^\top L_aV),
\end{equation}
we have
\begin{equation}
\nabla\phi(V)=\mu L_aV,
\qquad
\|\nabla\phi(V_1)-\nabla\phi(V_2)\|_F
\le \mu\|V_1-V_2\|_F.
\end{equation}
\end{proposition}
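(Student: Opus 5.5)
The plan is to verify each structural property of $S_a=ZD_a^{-1}Z^\top$ directly from the factorization, and then deduce the bounds on $L_a$ and the smoothness of $\phi$. Symmetry is immediate because $D_a^{-1}$ is diagonal, so $(ZD_a^{-1}Z^\top)^\top=ZD_a^{-1}Z^\top$. Entrywise nonnegativity follows since $Z\ge0$ and $D_a^{-1}$ has positive diagonal, the latter because of the assumption $d_a>0$. Positive semidefiniteness follows by writing
\[
x^\top S_a x=\|D_a^{-1/2}Z^\top x\|_2^2\ge0 .
\]
Row-stochasticity is a chain of identities:
\[
S_a\mathbf1_n=ZD_a^{-1}Z^\top\mathbf1_n=ZD_a^{-1}d_a=Z\mathbf1_m=\mathbf1_n .
\]

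The key step is the upper bound $S_a\preceq I_n$. I would obtain it by bounding the spectral radius through a matrix norm. Since $S_a$ is nonnegative with unit row sums, $\|S_a\|_\infty=1$, and therefore every eigenvalue satisfies $|\lambda|\le\|S_a\|_\infty=1$. Because $S_a$ is symmetric, its eigenvalues are real, and together with $S_a\succeq0$ they all lie in $[0,1]$. This gives $0\preceq S_a\preceq I_n$.

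The bound $0\preceq L_a\preceq I_n$ then follows at once from $L_a=I_n-S_a$. The identity $L_a\mathbf1_n=0$ is just $S_a\mathbf1_n=\mathbf1_n$ rewritten. As an alternative route for the eigenvalue bound, one could use a Schur-test/Cauchy--Schwarz argument; I expect this step to be the only one needing care, and the $\infty$-norm argument suffices.

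For $\phi$, since $L_a$ is symmetric, the differential of $\tfrac{\mu}{2}\operatorname{tr}(V^\top L_aV)$ is
\[
\tfrac{\mu}{2}\operatorname{tr}(\Delta^\top L_aV+V^\top L_a\Delta)=\mu\operatorname{tr}(\Delta^\top L_aV),
\]
so $\nabla\phi(V)=\mu L_aV$. The Lipschitz bound follows from
\[
\|\mu L_a(V_1-V_2)\|_F\le\mu\|L_a\|_2\|V_1-V_2\|_F ,
\]
where $\|L_a\|_2\le1$ by the spectral bounds established above. I implicitly assume $\mu\ge0$, which should be stated.
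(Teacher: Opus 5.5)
Your proposal is correct and follows essentially the same route as the paper's appendix proof: PSD from the factorization $S_a=(ZD_a^{-1/2})(ZD_a^{-1/2})^\top$, stochasticity from $ZD_a^{-1}d_a=Z\mathbf1_m$, a spectral-radius-one bound (which you justify explicitly via $\|S_a\|_\infty=1$) combined with $S_a\succeq0$ to place the spectrum in $[0,1]$, and then $\|L_a\|_2\le1$ for the Lipschitz bound. Your remark that $\mu\ge0$ must be assumed for the stated constant is a fair point that the paper leaves implicit.
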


The anchor action is evaluated without forming an $n\times n$ matrix:
\begin{equation}
\label{eq:anchor-action}
L_aV=V-Z\left[D_a^{-1}(Z^\top V)\right].
\end{equation}
If each sample connects to at most $q$ anchors, then $\operatorname{nnz}(Z)\le nq$, so one anchor action costs $\mathcal O(nqr)$ operations and $\mathcal O(nq)$ additional storage.

\subsection{Reduced sparse-fitting objective and gradients}
\label{sec:uvg-objective}
Let $A=A^\top\ge0$ be a sparse affinity matrix with symmetric support $\Omega$. Our main reduced objective is
\begin{equation}
\label{eq:uvg-reduced-objective}
\begin{aligned}
\Phi(U,V):={}&\frac12\left\|\mathcal P_\Omega\!\left(UD_{\bar g}^{-1}V^\top-A\right)\right\|_F^2\\
&+\frac\mu2\operatorname{tr}(V^\top L_aV),
\qquad (U,V)\in\mathcal C.
\end{aligned}
\end{equation}
The factor $V$ is designated as the manifold-aware factor. This does not privilege a particular mathematical side: Proposition~\ref{prop:factor-swap} shows that regularizing $U$ instead gives the factor-swapped problem.

Let
\begin{equation}
E:=\mathcal P_\Omega(W-A),
\qquad
g:=\bar g(U,V),
\qquad
D:=\operatorname{Diag}(g),
\end{equation}
and define $h\in\mathbb R^r$ componentwise by
\begin{equation}
\label{eq:g-derivative}
h_j:=-\frac{[U^\top E V]_{jj}}{g_j^2}.
\end{equation}
A direct chain-rule calculation gives
\begin{equation}
\label{eq:uv-reduced-gradients}
\begin{aligned}
G_U(U,V):=\nabla_U\Phi(U,V)
&=EVD^{-1}+\frac12\mathbf1_n h^\top,\\
G_V(U,V):=\nabla_V\Phi(U,V)
&=E^\top UD^{-1}+\mu L_aV+\frac12\mathbf1_n h^\top.
\end{aligned}
\end{equation}
Thus the equal split of the $g$-derivative is exactly the chain rule induced by \eqref{eq:gbar}; it is not an auxiliary heuristic.

\begin{proposition}[Factor-swap equivalence]
\label{prop:factor-swap}
Assume $A=A^\top$ and $\Omega=\Omega^\top$. Define
\begin{align}
\Phi_V(U,V)
&:=f_\Omega\!\left(UD_g^{-1}V^\top;A\right)
+\frac\mu2\operatorname{tr}(V^\top L_aV),\\
\Phi_U(U,V)
&:=f_\Omega\!\left(UD_g^{-1}V^\top;A\right)
+\frac\mu2\operatorname{tr}(U^\top L_aU),
\end{align}
where $f_\Omega(W;A)=\frac12\|\mathcal P_\Omega(W-A)\|_F^2$ and $(U,V,g)$ satisfies \eqref{eq:uvg-marginals}. Then
\begin{equation}
\Phi_V(U,V)=\Phi_U(V,U).
\end{equation}
Consequently, the $V$-regularized and $U$-regularized problems have the same optimal value, and their feasible stationary points correspond under $(U,V)\mapsto(V,U)$.
\end{proposition}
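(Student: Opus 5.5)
The plan is to verify the identity $\Phi_V(U,V)=\Phi_U(V,U)$ directly, using only the transpose invariance of the observed-edge loss and the symmetry of the constraint set under swapping the factors. Both consequences then follow from the fact that the swap is a linear involution preserving the feasible set.

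\textbf{Step 1: the swap preserves feasibility and the latent mass.} The constraints \eqref{eq:uvg-marginals} are symmetric in $U$ and $V$. Hence $(U,V,g)$ is feasible if and only if $(V,U,g)$ is feasible, with the same $g$. Likewise, $\bar g(U,V)=\bar g(V,U)$, so $D_g$ does not change under the swap, both on $\mathcal C$ and in the smooth extension \eqref{eq:gbar}.

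\textbf{Step 2: the fit term is swap invariant.} Set $W=UD_g^{-1}V^\top$. The swapped graph is $VD_g^{-1}U^\top=W^\top$, because $D_g$ is diagonal. Since $\Omega=\Omega^\top$, the projection commutes with transposition: $\PO(X^\top)=\PO(X)^\top$. Using $A=A^\top$, this gives
\[
f_\Omega(W^\top;A)=\tfrac12\|\PO(W^\top-A^\top)\|_F^2=\tfrac12\|\PO(W-A)^\top\|_F^2=f_\Omega(W;A).
\]

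\textbf{Step 3: match the regularizers.} By definition, $\Phi_U(V,U)$ carries the regularizer $\frac\mu2\operatorname{tr}(V^\top L_aV)$, which is exactly the regularizer in $\Phi_V(U,V)$. Combined with Step 2, this proves the identity.

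\textbf{Step 4: the two consequences.} Let $\sigma(U,V)=(V,U)$. It is a linear involution mapping $\mathcal C$ onto itself, and $\Phi_V=\Phi_U\circ\sigma$.
\begin{itemize}[leftmargin=1.25em]
\item \emph{Equal optimal values.} Infima of $\Phi_V$ and $\Phi_U$ over $\mathcal C$ coincide, with minimizers corresponding under $\sigma$. Attainment holds because $\mathcal C$ is compact by Proposition~\ref{prop:shared-marginal-geometry}, wherever the objective is continuous (e.g.\ $g>0$).
\item \emph{Stationary points.} The chain rule gives $\nabla\Phi_V(U,V)=\sigma^*\nabla\Phi_U(\sigma(U,V))$, i.e.\ the $U$- and $V$-blocks of the gradient are swapped. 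The normal cone (KKT multipliers) of $\mathcal C$ at $(U,V)$ is the image under $\sigma$ of the normal cone at $(V,U)$, since $\sigma$ is an isometric automorphism of the polytope. So the first-order conditions correspond, row multipliers and shared-mass multipliers exchanged accordingly.
\end{itemize}

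The main point needing care is this stationarity correspondence. One must state precisely which notion of stationarity is meant, namely the KKT conditions for the linear constraints plus nonnegativity, and check that the multiplier for the shared constraint $U^\top\ones=V^\top\ones$ flips sign under the swap while remaining a valid multiplier. Everything else is direct verification.
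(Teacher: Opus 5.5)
Your proposal is correct and follows the paper's own argument. The paper's proof has the same ingredients: the swap leaves the feasible set and $g$ invariant, $VD_g^{-1}U^\top=W^\top$, $f_\Omega$ is transpose-invariant under $A=A^\top$ and $\Omega=\Omega^\top$, and both consequences follow because the swap is a linear bijection of the feasible set carrying one objective to the other. Your extra care about the extension $\bar g$ and the sign flip of the shared-marginal multiplier fits that argument. The attainment remark is unnecessary, since equal optimal values (infima) do not require attainment.
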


\begin{proposition}[Symmetric manifold regularization preserves the one-factor submanifold]
\label{prop:symmetric-one-factor}
Assume $A=A^\top$, $\Omega=\Omega^\top$, and $L_a=L_a^\top$. Consider
\begin{equation}
\begin{aligned}
\Phi_{\rm sym}(U,V)={}&f_\Omega(W(U,V);A)\\
&+\frac\mu4\operatorname{tr}(U^\top L_aU)
+\frac\mu4\operatorname{tr}(V^\top L_aV).
\end{aligned}
\end{equation}
If an exact mirror-descent iteration is initialized with $U_0=V_0$, then
\begin{equation}
U_k=V_k\qquad\text{for all }k.
\end{equation}
Hence the iteration remains on the one-factor submanifold
\begin{equation}
W_k=U_k\operatorname{Diag}(U_k^\top\mathbf1_n)^{-1}U_k^\top.
\end{equation}
\end{proposition}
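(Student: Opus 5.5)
The argument is an induction on $k$ built from one symmetry property: the exact mirror-descent map is equivariant under the swap $(U,V)\mapsto(V,U)$. Write one exact iteration as
\[
(U_{k+1},V_{k+1})=\Pi_{\mathcal C}^{\rm KL}\!\big(U_k\odot e^{-\eta G_U(U_k,V_k)},\,V_k\odot e^{-\eta G_V(U_k,V_k)}\big),
\]
where $\Pi_{\mathcal C}^{\rm KL}$ is the joint KL projection onto $\mathcal C$ and $\eta$ is the accepted step. It then suffices to prove two facts: (i) when $U=V$, the two gradients of $\Phi_{\rm sym}$ coincide; and (ii) the KL projection of a pair $(X,X)$ is a pair $(P,P)$. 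If backtracking is used, the accepted step depends only on objective values and Bregman distances. These are swap-invariant by (iii) below, so the same $\eta$ is selected whether or not the factors are swapped, and the induction closes.

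For (i), use the gradient formulas \eqref{eq:uv-reduced-gradients}, with the regularizer now contributing $\frac{\mu}{2}L_aU$ and $\frac{\mu}{2}L_aV$, each with coefficient $\mu/4\cdot 2$. Set $U=V$. Then $\bar g=U^\top\mathbf 1_n$ and $W=UD^{-1}U^\top$ is symmetric. Because $A=A^\top$ and $\Omega=\Omega^\top$, the residual $E=\mathcal P_\Omega(W-A)$ is symmetric. Hence $EVD^{-1}=E^\top UD^{-1}$, and the term $\frac12\mathbf 1_n h^\top$ is common to both gradients. The regularizer terms $\frac\mu2 L_aU$ and $\frac\mu2 L_aV$ also agree. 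Therefore $G_U=G_V$, and since $U_k=V_k$ the two mirror (multiplicative) pre-projection points are equal.

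For (iii) and (ii), argue by symmetry and uniqueness rather than by computing the projection. Since $f_\Omega(W^\top;A)=f_\Omega(W;A)$ by the symmetry of $A$ and $\Omega$, and $W(V,U)=W(U,V)^\top$, we get $\Phi_{\rm sym}(U,V)=\Phi_{\rm sym}(V,U)$; this is the same computation as in Proposition~\ref{prop:factor-swap}. The constraint set $\mathcal C$ is invariant under the swap, and the joint KL divergence $\KL(U\|X)+\KL(V\|Y)$ is swap-invariant. The projection is the minimizer of a strictly convex function over a convex set, so it is unique. If $(P,Q)$ is the projection of $(X,X)$, then $(Q,P)$ is also a minimizer, and uniqueness gives $P=Q$.

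The main obstacle is making uniqueness and well-definedness rigorous. One must check that the pre-projection points are strictly positive, which follows from the exponential update of positive iterates, so the projection exists and lies in the relative interior. One must also confirm that a deterministic line-search rule produces the identical accepted step under the swap. Finally, the one-factor form of $W_k$ follows from $\bar g=U_k^\top\mathbf 1_n$ when $U_k=V_k$.
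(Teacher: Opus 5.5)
Your proposal is correct and follows the paper's proof. Both arguments proceed by induction: at $U_k=V_k$ the residual is symmetric, and the shared-mass term and the regularizer gradients coincide, so $G_U=G_V$; the two exponentiated trials are therefore identical, and swap-invariance of $\mathcal C$ and of the joint KL objective, together with uniqueness of the projection, gives $U_{k+1}=V_{k+1}$. Your line-search argument (iii) is harmless but unnecessary, because (i)--(ii) hold for every trial $\eta$, so every backtracking candidate is already symmetric whichever step is accepted.
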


\subsection{Exact rank-space KL projection}
\label{sec:exact-kl-projection}
Before specializing to clustering, consider two factors with general prescribed row marginals. Let $a\in\R_{++}^{n}$ and $b\in\R_{++}^{m}$ satisfy
\[
\ones_n^\top a=\ones_m^\top b=:M,
\]
and define
\begin{equation}
\label{eq:general-shared-polytope}
\begin{aligned}
\mathcal C(a,b)=\{(U,V)\ge0:
&\ U\ones_r=a,\ V\ones_r=b,\\
&\ U^\top\ones_n=V^\top\ones_m\}.
\end{aligned}
\end{equation}
The clustering feasible set in \eqref{eq:uv-feasible-set} is the special case $m=n$ and $a=b=\ones_n$.

Given positive references $\bar U\in\R_{++}^{n\times r}$ and $\bar V\in\R_{++}^{m\times r}$, consider
\begin{equation}
\label{eq:uv-kl-projection}
\min_{(U,V)\in\mathcal C(a,b)}\KL(U\|\bar U)+\KL(V\|\bar V).
\end{equation}
For $z\in\R^r$, define rowwise
\begin{equation}
\label{eq:uv-softmax-general}
u_i(z)=a_i\frac{\bar u_i\odot e^z}{\bar u_i^\top e^z},\qquad
v_\ell(z)=b_\ell\frac{\bar v_\ell\odot e^{-z}}{\bar v_\ell^\top e^{-z}},
\end{equation}
and let $U(z),V(z)$ stack these rows. Set
\begin{equation}
\label{eq:uv-dual-potential}
\psi_{a,b}(z)=\sum_{i=1}^{n}a_i\log(\bar u_i^\top e^z)
+\sum_{\ell=1}^{m}b_\ell\log(\bar v_\ell^\top e^{-z}).
\end{equation}

\begin{theorem}[Exact shared-marginal KL projection]
\label{thm:uv-low-dimensional-kl}
Let $A_a=\operatorname{Diag}(a)$, $B_b=\operatorname{Diag}(b)$,
$c_U(z)=U(z)^\top\ones_n$, and $c_V(z)=V(z)^\top\ones_m$. Then
\begin{align}
\nabla\psi_{a,b}(z)&=c_U(z)-c_V(z),\label{eq:uv-dual-gradient}\\
\nabla^2\psi_{a,b}(z)
&=\operatorname{Diag}(c_U)-U^\top A_a^{-1}U \nonumber\\
&\quad+\operatorname{Diag}(c_V)-V^\top B_b^{-1}V\succeq0.\label{eq:uv-dual-hessian}
\end{align}
The nullspace of \eqref{eq:uv-dual-hessian} is exactly $\operatorname{span}\{\ones_r\}$ and
$\psi_{a,b}(z+c\ones_r)=\psi_{a,b}(z)$. After fixing the gauge $z_r=0$, $\psi_{a,b}$ is strictly convex and coercive, hence has a unique minimizer $z^\star$. The unique solution of \eqref{eq:uv-kl-projection} is $(U(z^\star),V(z^\star))$. Thus the exact projection has only $r-1$ effective unknowns; one Hessian evaluation costs $O((n+m)r^2)$ and the reduced Newton solve costs $O(r^3)$. For the clustering case $a=b=\ones_n$, \eqref{eq:uv-dual-hessian} reduces to
\[
\operatorname{Diag}(c_U)-U^\top U+\operatorname{Diag}(c_V)-V^\top V.
\]
\end{theorem}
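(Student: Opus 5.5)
The plan is a Lagrangian duality argument for the linearly constrained KL problem \eqref{eq:uv-kl-projection}, with the row marginals eliminated in closed form.

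\textbf{Lagrangian and primal minimizers.} I will use the generalized KL divergence $\KL(U\|\bar U)=\sum U_{ij}\log(U_{ij}/\bar U_{ij})-U_{ij}+\bar U_{ij}$. I introduce multipliers $\alpha\in\R^n$ for $U\ones_r=a$, $\beta\in\R^m$ for $V\ones_r=b$, and $z\in\R^r$ for the coupling $U^\top\ones_n-V^\top\ones_m=0$. Stationarity of the Lagrangian in $U$ and $V$ gives
\[
U_{ij}=\bar U_{ij}e^{z_j-\alpha_i},\qquad V_{\ell j}=\bar V_{\ell j}e^{-z_j-\beta_\ell}.
\]
The row constraints then fix $e^{\alpha_i}=\bar u_i^\top e^z/a_i$ and $e^{\beta_\ell}=\bar v_\ell^\top e^{-z}/b_\ell$. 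This yields exactly \eqref{eq:uv-softmax-general}. Substituting back, the concave dual function in the single remaining multiplier $z$ equals $-\psi_{a,b}(z)$ plus a constant depending only on $a,b,\bar U,\bar V$. The constant collects the $a_i\log a_i$ and $b_\ell\log b_\ell$ terms and the total masses. So dual maximization is minimization of $\psi_{a,b}$.

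\textbf{Derivatives and null space.} The derivative of $a_i\log(\bar u_i^\top e^z)$ is $a_i$ times the softmax vector, which is $u_i(z)$. The second derivative is $a_i(\diag(p_i)-p_ip_i^\top)$ with $p_i=u_i/a_i$, which equals $\diag(u_i)-u_iu_i^\top/a_i$. For the $V$ part the sign of $z$ flips twice in the Hessian and once in the gradient. Summing over rows gives \eqref{eq:uv-dual-gradient} and \eqref{eq:uv-dual-hessian}. Each summand is a scaled categorical covariance, hence PSD. This also gives the complexity counts: forming $U^\top A_a^{-1}U$ costs $O((n+m)r^2)$, the reduced solve costs $O(r^3)$, and the case $a=b=\ones$ is immediate.

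Gauge invariance follows because $\bar u_i^\top e^{z+c\ones}=e^c\,\bar u_i^\top e^z$ and $\bar v_\ell^\top e^{-z-c\ones}=e^{-c}\,\bar v_\ell^\top e^{-z}$. The change in $\psi_{a,b}$ is therefore $c\sum_i a_i-c\sum_\ell b_\ell=c(M-M)=0$; this is where equal mass enters. For the null space, $w^\top\nabla^2\psi\, w=\sum_i a_i\operatorname{Var}_{p_i}(w)+\sum_\ell b_\ell\operatorname{Var}_{q_\ell}(w)$. Since $\bar U>0$, every $p_i$ has full support, so the variance vanishes only if $w$ is constant. Hence the null space is exactly $\operatorname{span}\{\ones_r\}$. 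Strict convexity on the gauge slice $\{z_r=0\}$ follows, because that slice meets $\operatorname{span}\{\ones_r\}$ only at $0$.

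\textbf{Coercivity.} I expect coercivity on the slice to be the main obstacle. I will use the bounds
\[
\log(\bar u_i^\top e^z)\ge \max_j z_j+\log\min_j\bar U_{ij},\qquad \log(\bar v_\ell^\top e^{-z})\ge -\min_j z_j+\log\min_j \bar V_{\ell j}.
\]
Summing these gives $\psi_{a,b}(z)\ge M(\max_j z_j-\min_j z_j)-C$. On the slice $z_r=0$, the spread $\max_j z_j-\min_j z_j$ is at least $\|z\|_\infty$, so $\psi_{a,b}\to\infty$ as $\|z\|\to\infty$. A continuous, strictly convex, coercive function has a unique minimizer $z^\star$.

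\textbf{Optimality of the projection.} At $z^\star$, the condition $\nabla\psi=0$ says $c_U=c_V$. Together with the positive rows summing to $a$ and $b$, this shows $(U(z^\star),V(z^\star))\in\mathcal C(a,b)$. The primal objective is strictly convex and the constraints are linear. The pair and its multipliers satisfy the KKT conditions, with positivity making the nonnegativity constraints inactive. Hence this pair is the unique solution of \eqref{eq:uv-kl-projection}. An alternative route is to verify directly that the primal value at this point equals the dual value, giving a zero duality gap.
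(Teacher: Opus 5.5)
Your proposal is correct. Its computational core matches the paper's proof almost line by line: the softmax derivatives, the variance argument for the null space, the equal-mass shift invariance, and the coercivity bound $\psi_{a,b}(z)\ge M(\max_j z_j-\min_j z_j)$ plus a constant.

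You differ from the paper in how you certify that $(U(z^\star),V(z^\star))$ is the projection. The paper works from the primal side. It exhibits the strictly positive feasible point $U_{ij}=a_i/r$, $V_{\ell j}=b_\ell/r$, and argues that the KL minimizer must be strictly positive, since a zero entry would give a one-sided directional derivative of $-\infty$ toward that point. The logarithmic KKT equations are then \emph{necessary}, and the paper reads off the softmax form with $z=-\lambda$ and the shared-column condition $\nabla\psi_{a,b}(z)=0$.

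You work from the dual side. Eliminating $\alpha,\beta$ in closed form identifies $-\psi_{a,b}$, up to a constant, as the partial Lagrange dual. Coercivity supplies $z^\star$, and \emph{sufficiency} of KKT for a convex objective under linear constraints finishes the argument, with positivity automatic from the softmax form.

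Your route avoids the existence-and-positivity argument for the primal minimizer and explains where $\psi_{a,b}$ comes from (zero duality gap). The paper's route identifies the projection without relying on attainment of the dual minimum. That advantage is moot here, because coercivity is needed anyway for the gauge-fixed uniqueness claim.

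Your use of the generalized KL is harmless: on $\mathcal C(a,b)$ each factor has total mass $M$, so it differs from the plain KL only by a constant.
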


The dual also admits matrix-free curvature products. For any $x\in\R^r$,
\begin{align}
\label{eq:uv-hvp}
\nabla^2\psi_{a,b}(z)x
&=\operatorname{Diag}(c_U)x-U^\top A_a^{-1}(Ux)\nonumber\\
&\quad+\operatorname{Diag}(c_V)x-V^\top B_b^{-1}(Vx),
\end{align}
which costs $O((n+m)r)$ without forming the Hessian. Thus large-rank variants can use Newton--CG on the gauge-fixed system, whereas the direct Cholesky route in Theorem~\ref{thm:uv-low-dimensional-kl} is preferable at the small ranks typical of clustering.

\emph{Proof sketch.}
The scaled row normalizations in \eqref{eq:uv-softmax-general} enforce the prescribed row marginals. Differentiating \eqref{eq:uv-dual-potential} gives \eqref{eq:uv-dual-gradient}; the Hessian is a positive weighted sum of row-wise categorical covariance matrices. Strict positivity makes their common nullspace exactly the all-ones direction. Equality of the total row masses gives shift invariance, and gauge fixing removes the only null direction. The remaining KKT equation is $c_U=c_V$, precisely the shared-column constraint. Appendix~\ref{app:kl-proof} gives the complete proof.

This result is objective-independent: any KL-mirror or Bregman method whose feasible factors have prescribed row marginals and a shared column marginal can use the same rank-space projection. It concerns the complete nonlinear KL projection itself, in contrast to cyclic Bregman realization of the balanced retraction and the separate rank-sized Schur system in the tangent-space projector of~\cite{Jawanpuria2026RLOT}. In the remainder we specialize to $a=b=\ones_n$.

\subsection{Feasible mirror-descent algorithm}
\label{sec:uvg-mirror-algorithm}
Given a feasible positive pair $(U_k,V_k)$ and step size $\eta>0$, form the exponentiated references
\begin{equation}
\label{eq:uv-exp-trial}
\bar U_k=U_k\odot\exp(-\eta G_U^k),
\qquad
\bar V_k=V_k\odot\exp(-\eta G_V^k),
\end{equation}
where $G_U^k=G_U(U_k,V_k)$ and $G_V^k=G_V(U_k,V_k)$. The next candidate is the exact KL projection of $(\bar U_k,\bar V_k)$ onto $\mathcal C$, computed by Theorem~\ref{thm:uv-low-dimensional-kl}. Equivalently,
\begin{equation}
\label{eq:uv-mirror-map}
\begin{aligned}
\mathcal T_\eta(U,V):=\arg\min_{(\widetilde U,\widetilde V)\in\mathcal C}
\Big\{&\langle G_U,\widetilde U-U\rangle_F\\
&+\langle G_V,\widetilde V-V\rangle_F\\
&+\frac1\eta D_{\rm KL}(\widetilde U\|U)\\
&+\frac1\eta D_{\rm KL}(\widetilde V\|V)\Big\},
\end{aligned}
\end{equation}
where $G_U=G_U(U,V)$ and $G_V=G_V(U,V)$.
Write $(U^+,V^+)=\mathcal T_\eta(U,V)$. A backtracking step is accepted when
\begin{equation}
\label{eq:uv-bregman-sd}
\begin{aligned}
\Phi(U^+,V^+)\le{}&\Phi(U,V)\\
&+\langle G_U,U^+-U\rangle_F\\
&+\langle G_V,V^+-V\rangle_F\\
&+\frac{1-\sigma}{\eta}D_{\rm KL}(U^+\|U)\\
&+\frac{1-\sigma}{\eta}D_{\rm KL}(V^+\|V),
\end{aligned}
\end{equation}
where $\sigma\in(0,1)$.
The line search starts from a fixed $\bar\eta>0$ at each outer iteration and multiplies it by $\rho\in(0,1)$ until \eqref{eq:uv-bregman-sd} holds.

\begin{lemma}[Mirror fixed points characterize first-order stationarity]
\label{lem:uv-mirror-fixed-point}
Let $(U,V)\in\mathcal C\cap\mathbb R_{++}^{2nr}$ and $\eta>0$. Then
\begin{equation}
\begin{aligned}
\mathcal T_\eta(U,V)=(U,V)
\quad\Longleftrightarrow\quad\;&
0\in (G_U(U,V),G_V(U,V))\\
&+N_{\mathcal C}(U,V).
\end{aligned}
\end{equation}
Equivalently, if $(U^+,V^+)=\mathcal T_\eta(U,V)$, then
\begin{equation}
\label{eq:uv-mirror-residual}
\frac{\mathcal D_{\rm sym}(U,V;U^+,V^+)}{\eta^2},
\end{equation}
where
\begin{align*}
\mathcal D_{\rm sym}:={}&D_{\rm KL}(U^+\|U)+D_{\rm KL}(U\|U^+)\\
&+D_{\rm KL}(V^+\|V)+D_{\rm KL}(V\|V^+).
\end{align*}
vanishes if and only if $(U,V)$ is first-order stationary.
\end{lemma}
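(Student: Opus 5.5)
The plan is to treat $\mathcal T_\eta(U,V)$ as the unique minimizer of a strictly convex problem over the polytope $\mathcal C$ and to compare its optimality condition with the stationarity inclusion. The objective in \eqref{eq:uv-mirror-map} is linear plus $\eta^{-1}$ times a KL divergence, hence strictly convex on $\mathcal C$. By Theorem~\ref{thm:uv-low-dimensional-kl} (with $a=b=\ones_n$ and references $\bar U=U\odot e^{-\eta G_U}$, $\bar V=V\odot e^{-\eta G_V}$, which are strictly positive since $U,V>0$), the minimizer exists, is unique, and has the explicit form $(U(z^\star),V(z^\star))$. In particular the minimizer is strictly positive. This rules out active nonnegativity constraints, so the normal cone of $\mathcal C$ at the minimizer reduces to the normal cone of the affine constraints.

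\emph{Step 1: optimality condition.} Because the minimizer is strictly positive and $\mathcal C$ is polyhedral, the first-order condition reads
\[
G_U+\eta^{-1}\log(U^+/U)\in -N_{\mathcal C}(U^+,V^+),\qquad
G_V+\eta^{-1}\log(V^+/V)\ \text{likewise},
\]
where $\log(U^+/U)$ is entrywise and the two conditions hold jointly. At interior points of the nonnegative orthant, $N_{\mathcal C}$ is the orthogonal complement of the affine tangent space $\{(\Delta U,\Delta V):\Delta U\ones=\Delta V\ones=0,\ \Delta U^\top\ones=\Delta V^\top\ones\}$. Explicitly, it is the set of pairs $(\alpha\ones^\top+\ones\gamma^\top,\ \beta\ones^\top-\ones\gamma^\top)$. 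This matches the exponential-family form of Theorem~\ref{thm:uv-low-dimensional-kl}, in which the multiplier $z$ enters $U$ with $+$ and $V$ with $-$.

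\emph{Step 2: ($\Rightarrow$).} If $(U^+,V^+)=(U,V)$, the log terms vanish. Step 1 then gives $-(G_U,G_V)\in N_{\mathcal C}(U,V)$, which is the stationarity inclusion.

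\emph{Step 3: ($\Leftarrow$).} Suppose $-(G_U,G_V)\in N_{\mathcal C}(U,V)$. Then $(U,V)$ satisfies the sufficient first-order condition of the convex problem \eqref{eq:uv-mirror-map}, because the KL gradient at $\widetilde U=U$ is zero. By convexity, $(U,V)$ is a minimizer, and by strict convexity it is the unique one. Hence $\mathcal T_\eta(U,V)=(U,V)$.

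The residual statement then follows because $\mathcal D_{\rm sym}\ge0$, with equality iff $U^+=U$ and $V^+=V$; for positive matrices of equal total mass, $\KL$ vanishes only at equality. Here the total mass of every feasible $U$ equals $n$, so generalized KL reduces to its equality case.

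The main obstacle is Step 1: justifying that the normal cone is purely affine at the minimizer. This requires strict positivity of $\mathcal T_\eta(U,V)$, and I take it from the softmax form in Theorem~\ref{thm:uv-low-dimensional-kl}. Since $U$ itself is strictly positive, the same affine description of $N_{\mathcal C}(U,V)$ applies in Steps 2 and 3.
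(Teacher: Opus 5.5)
Your proposal is correct and follows essentially the same route as the paper: for ($\Rightarrow$), the first-order condition of the strictly convex mirror subproblem, in which the log term vanishes at a fixed point; for ($\Leftarrow$), sufficiency of that condition together with uniqueness of the minimizer; and for the residual, nonnegativity of the four KL terms with their equality case. You also make explicit two points the paper leaves implicit, namely strict positivity of $\mathcal T_\eta(U,V)$ via Theorem~\ref{thm:uv-low-dimensional-kl} and the affine description of $N_{\mathcal C}$ at strictly positive points.
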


\begin{theorem}[Feasible descent and mirror-stationarity]
\label{thm:uvg-stationarity}
Assume:
\begin{enumerate}
\item $L_a\succeq0$ and $\|L_a\|_2\le1$;
\item $(U_0,V_0)\in\mathcal C\cap\mathbb R_{++}^{2nr}$;
\item the accepted iterates have nonvanishing latent masses: there exists $\underline g>0$ such that
\begin{equation}
\min_j \bar g_j(U_k,V_k)\ge\underline g\qquad\text{for all }k;
\end{equation}
\item every KL projection is solved exactly.
\end{enumerate}
Then the backtracking mirror iteration defined by \eqref{eq:uv-mirror-map}--\eqref{eq:uv-bregman-sd} has the following properties.

\textbf{(i) Finite backtracking and a uniform positive step-size lower bound.}
There exists $\eta_{\min}>0$ such that every accepted step satisfies
\begin{equation}
0<\eta_{\min}\le\eta_k\le\bar\eta.
\end{equation}

\textbf{(ii) Exact feasibility, positivity, and sufficient decrease.}
Every finite iterate belongs to $\mathcal C\cap\mathbb R_{++}^{2nr}$. Moreover,
\begin{equation}
\label{eq:uv-strong-decrease}
\begin{aligned}
&\Phi(U_k,V_k)-\Phi(U_{k+1},V_{k+1})\\
&\ge \frac{\sigma}{\eta_k}D_{\rm KL}(U_{k+1}\|U_k)
+\frac{\sigma}{\eta_k}D_{\rm KL}(V_{k+1}\|V_k)\\
&\quad+\frac1{\eta_k}D_{\rm KL}(U_k\|U_{k+1})
+\frac1{\eta_k}D_{\rm KL}(V_k\|V_{k+1}).
\end{aligned}
\end{equation}

\textbf{(iii) Asymptotic regularity.}
The objective values converge and
\begin{equation}
\sum_{k=0}^{\infty}
\left(\|U_{k+1}-U_k\|_F^2+\|V_{k+1}-V_k\|_F^2\right)<\infty.
\end{equation}
Consequently,
\begin{equation}
\|U_{k+1}-U_k\|_F+\|V_{k+1}-V_k\|_F\to0.
\end{equation}

\textbf{(iv) Non-asymptotic mirror-stationarity rate.}
Define
\begin{equation}
\label{eq:uv-stationarity-measure}
\begin{aligned}
\Delta_k:=\frac{1}{\eta_k^2}\Big[
&D_{\rm KL}(U_{k+1}\|U_k)+D_{\rm KL}(U_k\|U_{k+1})\\
+&D_{\rm KL}(V_{k+1}\|V_k)+D_{\rm KL}(V_k\|V_{k+1})\Big].
\end{aligned}
\end{equation}
Then, for every $N\ge1$,
\begin{equation}
\label{eq:uv-stationarity-rate}
\min_{0\le k<N}\Delta_k
\le
\frac{\Phi(U_0,V_0)-\Phi_{\inf}}
{\sigma\eta_{\min}N},
\end{equation}
where $\Phi_{\inf}\ge0$ is the infimum of $\Phi$ over its positive-mass feasible domain. In particular, $\Delta_k\to0$.

\end{theorem}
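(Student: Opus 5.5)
We prove (ii) first, then (i), (iii) and (iv), using the three-point inequality of the KL mirror step. Since $\mathcal C$ is convex and $D_{\rm KL}$ is the Bregman divergence of negative entropy, the optimality condition of \eqref{eq:uv-mirror-map} compares $(U^+,V^+)=\mathcal T_\eta(U,V)$ against the feasible point $(U,V)$. This gives
\[
\langle G_U,U^+-U\rangle_F+\langle G_V,V^+-V\rangle_F\le -\tfrac1\eta\big[\mathcal D_{\rm sym}(U,V;U^+,V^+)\big].
\]
It follows from the standard identity $\langle\nabla h(x^+)-\nabla h(x),x-x^+\rangle=-D(x\|x^+)-D(x^+\|x)$. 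Substituting this into the acceptance test \eqref{eq:uv-bregman-sd} leaves $-\tfrac1\eta[D(U^+\|U)+D(U\|U^+)+\dots]+\tfrac{1-\sigma}{\eta}D(U^+\|U)+\dots$. This is exactly the sufficient decrease \eqref{eq:uv-strong-decrease}.

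For feasibility and positivity, Theorem~\ref{thm:uv-low-dimensional-kl} with $a=b=\ones_n$ writes the projection as $(U(z^\star),V(z^\star))$, and \eqref{eq:uv-softmax-general} shows these are strictly positive whenever the references are. The references $\bar U_k,\bar V_k$ are positive because $U_k,V_k>0$ and the exponentials are finite. Positivity of $\bar g$ under assumption~3 keeps $\Phi$ and its gradients well defined.

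For (i), we show $\Phi$ is relatively smooth with respect to entropy on the region $\{(U,V)\in\mathcal C:\ \bar g\ge\underline g\}$. Entries are bounded by $1$ and $g_j\ge\underline g$, and the map $(U,V)\mapsto UD_{\bar g}^{-1}V^\top$ is smooth with derivatives bounded by polynomials in $1/\underline g$ and $n$. Hence $\nabla\Phi$ is Euclidean Lipschitz with some constant $L(\underline g,n,r,\mu)$ on the convex hull of such points, where the manifold term contributes $\mu$ by Proposition~\ref{prop:anchor-laplacian} and $\|L_a\|\le1$. Pinsker's inequality on each row (each row has mass $1$) gives $D_{\rm KL}(U^+\|U)\ge\tfrac12\|U^+-U\|_F^2$, summed over the $n$ rows. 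The descent lemma then yields $\Phi(U^+,V^+)\le\Phi+\langle G,\cdot\rangle+\tfrac L2\|\cdot\|^2\le\Phi+\langle G,\cdot\rangle+L\,D_{\rm KL}$. Consequently the test holds whenever $\eta\le(1-\sigma)/L$, so $\eta_{\min}=\min\{\bar\eta,\rho(1-\sigma)/L\}$.

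This step is the main obstacle. The mass bound in assumption~3 is stated only for accepted iterates, not for trial points, and the segment between them must stay in a region where $\bar g$ is bounded below. The first remedy I would try is a continuity argument: as $\eta\to0$ the trial point approaches $(U_k,V_k)$ at a rate controlled by the bounded gradients. Hence for $\eta$ below a uniform threshold, $\bar g$ along the segment stays above $\underline g/2$, and $L$ is computed with $\underline g/2$.

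For (iii), telescope \eqref{eq:uv-strong-decrease}. Here $\Phi\ge0$, $\Phi$ is monotone, and $\eta_k\le\bar\eta$, so $\sum_k\tfrac{\sigma}{\bar\eta}\mathcal D_{\rm sym}<\infty$. Pinsker converts this to summability of squared Frobenius steps, and the step norms therefore tend to zero.

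For (iv), write $\Delta_k=\mathcal D_{\rm sym,k}/\eta_k^2$. From \eqref{eq:uv-strong-decrease} we have $\mathcal D_{\rm sym,k}\le\tfrac{\eta_k}{\sigma}(\Phi_k-\Phi_{k+1})$, using $\sigma\le1$ on the reverse terms. Hence
\[
\Delta_k\le\frac{\Phi_k-\Phi_{k+1}}{\sigma\eta_k}\le\frac{\Phi_k-\Phi_{k+1}}{\sigma\eta_{\min}}.
\]
Summing over $k<N$ and bounding the minimum by the average gives \eqref{eq:uv-stationarity-rate}. Finally, $\Delta_k\to0$ follows because the series of the $\Delta_k$ converges.
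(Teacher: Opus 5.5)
Your proposal is correct and follows essentially the same route as the paper. The shared steps are: the symmetrized three-point inequality against $(U_k,V_k)$ substituted into the acceptance test for (ii), Euclidean Lipschitz smoothness on the compact mass-safe region combined with the Pinsker/entropy strong-convexity bound to guarantee acceptance for $\eta\le(1-\sigma)/L$, and telescoping for (iii)--(iv). Your ``continuity'' remedy for trial points is also the paper's step, which it makes quantitative by comparing the subproblem value at $(U^+,V^+)$ with that at $(U,V)$ to get $\|(U^+-U,V^+-V)\|_F\le 2\eta M$, where $M$ bounds the gradient on feasible pairs with $\min_j\bar g_j\ge\underline g$; this yields a mass threshold $\eta\le\underline g/(4M\sqrt n)$ that must enter your $\eta_{\min}$ alongside $(1-\sigma)/L$.
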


\paragraph*{Proof sketch}
The nonvanishing-mass assumption places every accepted iterate in a compact subset of $\mathcal C$ on which the reduced gradient is uniformly Lipschitz. Comparing the mirror subproblem with the current feasible point and using strong convexity of negative entropy shows that sufficiently small trials remain in a slightly larger positive-mass region; the descent lemma then guarantees acceptance, giving finite backtracking and a uniform positive step-size lower bound. Mirror optimality combined with the acceptance test yields the symmetrized-KL decrease inequality. Summation gives asymptotic regularity and the $O(1/N)$ residual bound. The longer technical derivation is omitted for brevity.

\begin{corollary}[KKT stationarity of positive accumulation points]
\label{cor:uvg-kkt}
Under the assumptions of Theorem~\ref{thm:uvg-stationarity}, every accumulation point $(U^\star,V^\star)$ whose entries are strictly positive satisfies
\begin{equation}
\label{eq:uv-normal-stationarity}
0\in (G_U(U^\star,V^\star),G_V(U^\star,V^\star))
+N_{\mathcal C}(U^\star,V^\star).
\end{equation}
Equivalently, there exist row multipliers $\alpha,\beta\in\mathbb R^n$ and a shared-marginal multiplier $\lambda\in\mathbb R^r$ such that
\begin{equation}
\label{eq:uv-kkt-positive}
\begin{aligned}
G_U(U^\star,V^\star)+\alpha\mathbf1_r^\top+\mathbf1_n\lambda^\top&=0,\\
G_V(U^\star,V^\star)+\beta\mathbf1_r^\top-\mathbf1_n\lambda^\top&=0,
\end{aligned}
\end{equation}
together with the feasibility equations defining $\mathcal C$.
\end{corollary}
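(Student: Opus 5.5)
Let $(U^\star,V^\star)$ be a strictly positive accumulation point, i.e.\ $(U_{k_j},V_{k_j})\to(U^\star,V^\star)$ along a subsequence. The plan is to pass to the limit in the optimality conditions of the mirror subproblem \eqref{eq:uv-mirror-map}, using Theorem~\ref{thm:uvg-stationarity}(i),(iii).

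\emph{Limit point and steps.} Since $\mathcal C$ is closed, $(U^\star,V^\star)\in\mathcal C$. The mass bound $\bar g(U_k,V_k)\ge\underline g$ passes to the limit, so $\bar g(U^\star,V^\star)\ge\underline g>0$. Hence $G_U,G_V$ are continuous at $(U^\star,V^\star)$, since they are rational in $(U,V)$ with denominators powers of $\bar g_j$. The steps satisfy $\eta_{k_j}\in[\eta_{\min},\bar\eta]$, so after a further subsequence $\eta_{k_j}\to\eta^\star\in[\eta_{\min},\bar\eta]$. By asymptotic regularity, $(U_{k_j+1},V_{k_j+1})\to(U^\star,V^\star)$ as well.

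\emph{Subproblem KKT conditions.} Each accepted iterate has strictly positive entries, by Theorem~\ref{thm:uvg-stationarity}(ii). The constraints of $\mathcal C$ are affine plus nonnegativity, and the minimizer of \eqref{eq:uv-mirror-map} is interior to the orthant; it equals $(U(z^\star),V(z^\star))$ of Theorem~\ref{thm:uv-low-dimensional-kl}. So its KKT system has no active sign multipliers. Writing the entrywise logarithm, there exist $\alpha^k,\beta^k\in\R^n$ and $\lambda^k\in\R^r$ with
\begin{align*}
G_U^k+\tfrac1{\eta_k}\log(U_{k+1}/U_k)+\alpha^k\ones_r^\top+\ones_n(\lambda^k)^\top&=0,\\
G_V^k+\tfrac1{\eta_k}\log(V_{k+1}/V_k)+\beta^k\ones_r^\top-\ones_n(\lambda^k)^\top&=0.
\end{align*}
In fact $\lambda^k=z^\star/\eta_k$ up to sign convention, and $\alpha^k,\beta^k$ are the row log-normalizers. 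Along the subsequence both $(U_{k_j},V_{k_j})$ and $(U_{k_j+1},V_{k_j+1})$ converge to the strictly positive limit. Therefore the log-ratio terms tend to $0$ entrywise, using $\eta_k\ge\eta_{\min}$, and $G^{k_j}\to G^\star$.

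\emph{Multipliers and the main obstacle.} The remaining task, and the main obstacle, is to show that the multipliers converge or at least stay bounded. They are defined only up to the gauge $\lambda\mapsto\lambda+c\ones_r$, $\alpha\mapsto\alpha-c\ones_n$, $\beta\mapsto\beta+c\ones_n$, so I fix the gauge $\lambda_r^k=0$. Then I solve for the multipliers linearly from the displayed equations. Let $R^k$ denote the known left-hand terms, namely gradient plus log-ratio. Entry $(i,r)$ of the first equation gives $\alpha^k_i=-R^k_{U,ir}$. Entry $(1,j)$ then gives $\lambda^k_j=-R^k_{U,1j}-\alpha^k_1$, and the second equation gives $\beta^k$ similarly. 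Thus the multipliers are continuous, linear functions of convergent quantities, so they converge to some $(\alpha,\beta,\lambda)$. Equivalently, the linear map $(\alpha,\beta,\lambda)\mapsto(\alpha\ones_r^\top+\ones_n\lambda^\top,\ \beta\ones_r^\top-\ones_n\lambda^\top)$ is injective under this gauge and has closed range.

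\emph{Conclusion.} Passing to the limit yields \eqref{eq:uv-kkt-positive} at $(U^\star,V^\star)$, together with feasibility. Because $U^\star,V^\star>0$, the normal cone $N_{\mathcal C}(U^\star,V^\star)$ is exactly the orthogonal complement of the affine constraint directions. That complement is precisely the range of the above map, which gives the equivalent inclusion \eqref{eq:uv-normal-stationarity}. Alternatively, one can argue through Lemma~\ref{lem:uv-mirror-fixed-point}: by continuity of the exact projection in its positive reference, $\mathcal T_{\eta^\star}(U^\star,V^\star)$ is the limit of $(U_{k_j+1},V_{k_j+1})$, which equals $(U^\star,V^\star)$. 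The continuity follows from the implicit function theorem applied to the strictly convex gauge-fixed dual. The lemma then gives stationarity.
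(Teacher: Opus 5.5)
Your proposal is correct and follows essentially the same route as the paper: along the subsequence, you pass to the limit in the first-order conditions of the mirror subproblem, using asymptotic regularity, strict positivity of the limit, and $\eta_k\ge\eta_{\min}$ to make $\eta_k^{-1}\log(U_{k+1}/U_k)$ and $\eta_k^{-1}\log(V_{k+1}/V_k)$ vanish. The difference is only in how the limit is taken: the paper works in normal-cone form, invoking the closed graph of $N_{\mathcal C}$, and expands the normal space at the end, whereas you take the limit directly in multiplier form and obtain convergence of $(\alpha^k,\beta^k,\lambda^k)$ from the gauge fix $\lambda^k_r=0$ (equivalently, closedness of a finite-dimensional range); this is an equally valid and slightly more explicit variant.
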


\begin{remark}[Vanishing latent components]
\label{rem:vanishing-g}
The nonvanishing-mass assumption is needed because $D_g^{-1}$ becomes singular when $g_j\to0$. The present algorithm does not include a barrier that guarantees a uniform lower bound on every latent mass, so Theorem~\ref{thm:uvg-stationarity} is explicitly conditional on this property. If a latent mass tends to zero, the corresponding rank-one component carries vanishing total mass in the shared-marginal factorization and can be removed, after which the analysis applies to the active reduced-rank representation. In implementation, $\min_j g_j$ should therefore be monitored explicitly.
\end{remark}

\begin{remark}[Scope of novelty]
Shared-marginal low-rank coupling factorizations and KL mirror descent have antecedents in low-rank optimal transport. The present contribution should therefore not claim the parameterization itself as new. The paper-specific theoretical contribution is the specialization to sparse doubly stochastic graph fitting with anchor-reduced manifold regularization, the $(r-1)$-dimensional exact KL projection characterization, and the resulting feasible Bregman descent and stationarity analysis.
\end{remark}

\subsection{Practical anchor construction and parameter scaling}
\label{sec:practical}
The theoretical results treat $L_a$ and $\mu$ as fixed. In the implementation, $m$ anchors are selected from the feature space, each sample is connected to a small number $q$ of nearby anchors, and the resulting nonnegative weights are normalized row-wise to obtain $Z\ones=\ones$. This gives the operator in \eqref{eq:anchor-operator}.

The strength of an anchor manifold should decrease when the anchor graph disagrees with the observed sparse graph. We therefore use the label-free agreement score
\begin{equation}
\label{eq:kappa}
\kappa(A,S_a)=
\frac{\langle A,\PO(S_a)\rangle}
{\|A\|_F\,\|\PO(S_a)\|_F},
\end{equation}
and scale
\begin{equation}
\label{eq:adaptive-mu}
\mu=c\,\kappa(A,S_a)^2
\frac{f_{\rm fit}(U_0,V_0)}{f_{\rm man}(V_0)}.
\end{equation}
The scalar $c$ is selected on a held-out development set and then fixed. This normalization makes $c$ dimensionless and reduces excessive smoothing when local and anchor graphs disagree. We use $c=10$ in all reported experiments.

KL mirror descent is sensitive to coordinates initialized extremely close to zero. Given a common initial hard partition $H$, we use an interior initialization
\begin{equation}
\label{eq:interior-init}
U_0=V_0=(1-\varepsilon)H+\frac{\varepsilon}{r}\ones_n\ones_r^\top,
\end{equation}
with $\varepsilon=0.05$ selected on held-out development data. The manifold-aware factor $V$ is a soft latent representation rather than a coordinate-aligned one-hot membership matrix. We therefore row-normalize $V$ and apply $k$-means for the final discretization. This decoder is fixed before the reported comparisons; a direct row-wise argmax was evaluated separately and was substantially less accurate.

\begin{algorithm}[t]
\caption{Shared-Marginal Mirror Descent (SM-MD)}
\label{alg:smmd-uvg}
\begin{algorithmic}[1]
\Require Sparse affinity $A$, anchor matrix $Z$, rank $r$, manifold weight $\mu$, $\bar\eta>0$, $\rho\in(0,1)$, $\sigma\in(0,1)$.
\State Initialize $U_0,V_0>0$ with $(U_0,V_0)\in\mathcal C$ and set $g_0=U_0^\top\mathbf1_n=V_0^\top\mathbf1_n$.
\For{$k=0,1,\ldots$}
  \State Compute $W_k$ only on $\Omega$ and form the sparse residual $E_k=\mathcal P_\Omega(W_k-A)$.
  \State Compute $G_U^k,G_V^k$ from \eqref{eq:uv-reduced-gradients}, using $L_aV_k=V_k-Z[D_a^{-1}(Z^\top V_k)]$.
  \State Set $\eta\gets\bar\eta$.
  \Repeat
    \State $\bar U\gets U_k\odot\exp(-\eta G_U^k)$, $\bar V\gets V_k\odot\exp(-\eta G_V^k)$.
    \State Solve the $(r-1)$-dimensional strictly convex dual in Theorem~\ref{thm:uv-low-dimensional-kl}.
    \State Obtain the exact feasible projection $(U^+,V^+)$ and set $g^+=\frac12(U^{+\top}\mathbf1_n+V^{+\top}\mathbf1_n)$.
    \If{the Bregman sufficient-decrease test \eqref{eq:uv-bregman-sd} fails}
      \State $\eta\gets\rho\eta$.
    \EndIf
  \Until{\eqref{eq:uv-bregman-sd} holds}
  \State $(U_{k+1},V_{k+1},g_{k+1})\gets(U^+,V^+,g^+)$.
  \If{the stationarity measure \eqref{eq:uv-stationarity-measure} is below tolerance}
    \State \textbf{break}
  \EndIf
\EndFor
\State Return $U,V,g$ and, when a symmetric graph is required, $W_s=(W+W^\top)/2$.
\end{algorithmic}
\end{algorithm}

\subsection{Complexity}
Let $|E|=\operatorname{nnz}(A)$ and let $J$ denote the number of Newton evaluations used by the exact KL projection in one accepted mirror step. The graph is evaluated only on the sparse support $\Omega$. Computing the residual and the products $EV$ and $E^\top U$ costs $O(|E|r)$. The derivative through the shared marginal can be accumulated with $r\times r$ products in $O(nr^2)$ time. With at most $q$ nonzero anchor weights per sample, the manifold action costs $O(nqr)$. Each Newton evaluation of the projection costs $O(nr^2)$, followed by an $O(r^3)$ gauge-fixed solve. Thus one accepted outer iteration has the order
\begin{equation}
O\!\left(|E|r+nqr+J(nr^2+r^3)\right),
\label{eq:complexity-main}
\end{equation}
where the $O(nr^2)$ gradient term is absorbed by the projection term for $J\ge1$. In practice $J$ is small. The working storage, including the sparse input graph and anchor representation, is
\begin{equation}
O\!\left(|E|+n(r+q)\right).
\label{eq:storage-main}
\end{equation}
Neither $W$, $S_a$, nor $L_a$ is materialized as a dense $n\times n$ matrix. When $r$ is large enough that forming the $r\times r$ Hessian is undesirable, \eqref{eq:uv-hvp} supports a matrix-free Newton--CG projection whose curvature products cost $O(nr)$ in the clustering specialization; we retain direct Cholesky in the reported experiments because the clustering ranks are small.

Table~\ref{tab:complexity-compare} places this cost beside representative DS and shared-marginal alternatives. The comparison reports dominant optimization cost after graph construction. For LoRD/B-LoRD, $b$ is the number of inner projection iterations; for low-rank Sinkhorn, $T$ is the number of LR-Dykstra iterations and $d$ is the rank of a factored cost matrix. Input storage is shown when it determines scalability.

\begin{table*}[t]
\centering
\caption{Dominant optimization cost and storage after graph construction. $|E|$ denotes the number of nonzeros in a sparse sample graph. Costs are per outer iteration unless stated otherwise.}
\label{tab:complexity-compare}
\scriptsize
\setlength{\tabcolsep}{4.0pt}
\renewcommand{\arraystretch}{1.15}
\begin{tabular}{p{2.0cm}p{4.2cm}p{3.2cm}p{3.2cm}p{1.8cm}}
\toprule
Method & Dominant time & Feasibility / inner step & Working storage & Dense learned $n\times n$ state? \\
\midrule
SDSGC~\cite{Wang2025SDSGC}
& $O((r+1)n^2)$
& spectral update + ALM graph update
& $O(n^2+nr)$
& Yes \\
DCD~\cite{Yang2016DCD,Lyu2026LoRD}
& $O(|E|r)$ for sparse similarity
& multiplicative MM; no separate matrix projection
& $O(|E|+nr)$
& No \\
LoRD~\cite{Lyu2026LoRD}
& $O(|E|r+nrb_L)$
& Euclidean/Dykstra projection, $b_L$ inner iterations
& $O(|E|+nr)$
& No \\
B-LoRD~\cite{Lyu2026LoRD}
& $O(|E|r+nrb_B)$
& modified projected step, $b_B$ inner iterations
& $O(|E|+nr)$
& No \\
Low-rank Sinkhorn~\cite{Scetbon2021}
& $O(n^2r+Tnr)$ for a general equal-size dense cost; $O(ndr+Tnr)$ for rank-$d$ cost
& LR-Dykstra scaling/projection
& $O(nr)$ factors + cost representation
& No \\
\textbf{SMDSC}
& $O(|E|r+nqr+J(nr^2+r^3))$
& exact $(r-1)$-D Newton projection, $J$ evaluations
& $O(|E|+n(r+q))$
& No \\
\bottomrule
\end{tabular}
\end{table*}

The table makes the scalability claim precise rather than absolute. DCD and LoRD/B-LoRD also avoid a dense learned graph and can be very efficient on sparse inputs; SMDSC does not improve their linear-in-$n$ storage order. Its computational distinction is instead that exact two-factor DS feasibility and the anchor manifold are retained without materializing a dense graph, while the additional feasibility work is confined to a rank-sized Newton system. The direct full-graph SDSGC model pays quadratic memory, whereas low-rank Sinkhorn can become linear in sample size only when the transport cost itself admits a low-rank representation.

\section{Experiments}
\label{sec:experiments}
\subsection{Evaluation protocol}
The model and optimization settings are fixed throughout the reported experiments. Unless otherwise stated, the numerical values in this section come from a single-thread reference implementation that mirrors the stated formulas. The experiments are intended to assess clustering quality, structural feasibility, ablation behavior, and the optimization diagnostics predicted by the theory. Wall-clock values are interpreted only within the common numerical environment used for the corresponding comparison and are not used to claim cross-software speedups.

The main paper uses a \emph{clean matched protocol}. On each trial, methods share the same standardized data, symmetric 10-NN affinity graph, paired random seed, and a common spectral partition mapped into method-compatible initial variables. Method-specific restart advantages are disabled in the main comparison: DCD uses a single matched start, and LoRD/B-LoRD use a single matched feasible start. The matched SDSGC route uses the common sparse support and is therefore labeled \emph{SDSGC-Clean} rather than presented as the original dense solver. A common decoder and matched iteration/time budgets are used whenever the method output permits them. Method-specific native procedures---including DCD prior-assisted starts, the dense/GLAN SDSGC route, and LoRD/B-LoRD multi-start selection---are retained only as supplementary fidelity checks and are excluded from the main ranking and primary scalability conclusions.

The reporting protocol uses paired seeds, the exact shared-marginal Newton projection, and row-normalized $k$-means on $V$ for SMDSC. The main accuracy table uses three paired seeds for the compared methods, while a separate ten-seed analysis evaluates the stability of the fixed SMDSC configuration. The medium benchmark contains Digits1200, Blobs1500, Anisotropic1500, Varied1500, and Imbalanced1800. The large benchmark contains Blobs5000, Varied8000, and Imbalanced10000. We report adjusted Rand index (ARI), normalized mutual information (NMI), clustering accuracy (ACC), feasibility residuals, and anytime ARI. Clean matched and method-specific native procedures are kept separate so that restart policies and preprocessing choices do not enter the main ranking implicitly.

\subsection{Medium-scale comparison}
Table~\ref{tab:medium-dev} reports the matched comparison for the $V$-only shared-marginal model with exact Newton KL projection, adaptive anchor weighting, and row-normalized $k$-means decoding. The proposed method is denoted SMDSC. The comparison uses five datasets and three paired seeds; the separate ten-seed experiment below assesses sensitivity to initialization more directly.

\begin{table*}[t]
\centering
\caption{Medium-scale matched results averaged over three paired seeds.}
\label{tab:medium-dev}
\small
\begin{tabular}{lcccc}
\toprule
Method & ARI & ACC & NMI & RI \\
\midrule
SMDSC & \textbf{0.8308} & \textbf{0.8443} & \textbf{0.9011} & \textbf{0.9493} \\
DS-Sinkhorn & 0.7014 & 0.7251 & 0.8285 & 0.9021 \\
DCD & 0.5840 & 0.6193 & 0.7479 & 0.8512 \\
SC & 0.5658 & 0.6090 & 0.7306 & 0.8438 \\
B-LoRD & 0.5462 & 0.6691 & 0.6845 & 0.8352 \\
LoRD & 0.5346 & 0.6515 & 0.6797 & 0.8400 \\
SDSGC-Clean & 0.2184 & 0.4234 & 0.3445 & 0.6474 \\
\bottomrule
\end{tabular}
\end{table*}

\subsection{Ten-seed stability}
To check whether the fixed configuration depends on a few favorable initializations, we reran SMDSC on the same five medium graphs with ten paired spectral-initialization seeds. Table~\ref{tab:ten-seed} reports the resulting ARI mean and standard deviation. The synthetic Blobs and Anisotropic cases are essentially invariant to initialization, and Varied1500 is also highly stable. Imbalanced1800 shows a wider spread, identifying imbalanced graphs as an important stress case. Across all 50 runs, the maximum row-simplex residual is below $5\times10^{-16}$ and the maximum shared-marginal residual is below $10^{-11}$.

\begin{table}[t]
\centering
\caption{Ten-seed stability of SMDSC.}
\label{tab:ten-seed}
\small
\begin{tabular}{lcc}
\toprule
Dataset & ARI mean & ARI std. \\
\midrule
Digits1200 & 0.6348 & 0.0055 \\
Blobs1500 & 1.0000 & 0.0000 \\
Anisotropic1500 & 0.9984 & 0.0006 \\
Varied1500 & 0.7221 & 0.0011 \\
Imbalanced1800 & 0.7128 & 0.0800 \\
\midrule
Macro mean & 0.8136 & -- \\
\bottomrule
\end{tabular}
\end{table}

\subsection{Large-scale anytime behavior}
Fig.~\ref{fig:anytime} compares ARI under a common time budget in the same numerical environment. SMDSC reaches useful solutions quickly. Its 0--5 s ARI area under the curve (AUC) is 0.7505, compared with 0.7055 for LoRD and 0.5896 for DCD. The practical DS-Sinkhorn variant has the highest AUC (0.7696) and 5 s ARI (0.8186), but it is not the theoretical main algorithm because its post-step scaling can increase the optimized objective. SMDSC instead preserves exact feasibility and the sufficient-decrease theorem at every accepted step. Its ARI peaks around the 2 s checkpoint in this experiment and then decreases while the optimization objective continues to decrease, indicating a remaining objective--clustering alignment issue at later iterations rather than a feasibility or descent failure.

\begin{figure}[t]
\centering
\includegraphics[width=\columnwidth]{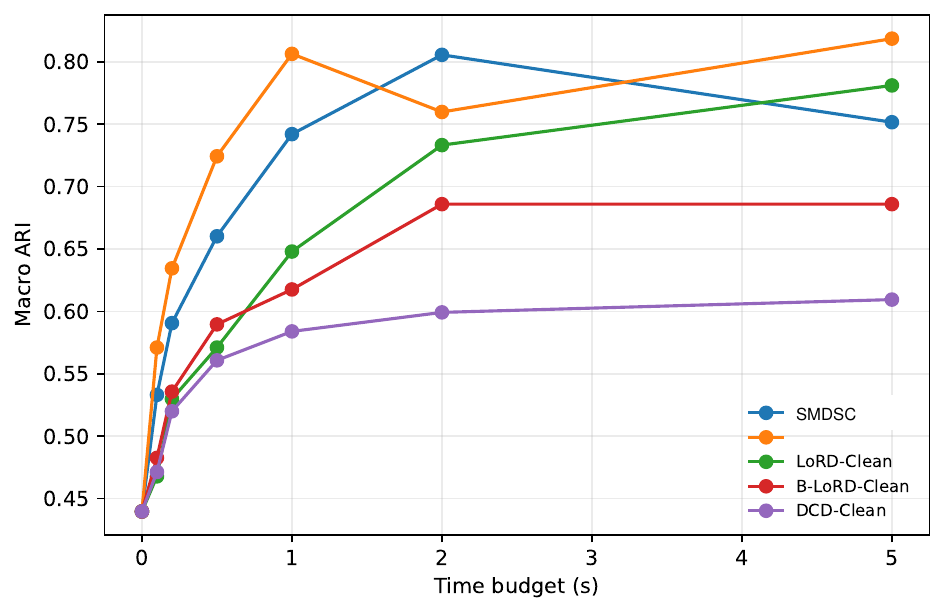}
\caption{Large-scale anytime ARI on three benchmark graphs, averaged over three paired trials in the same numerical environment.}
\label{fig:anytime}
\end{figure}

\begin{table}[t]
\centering
\caption{Large-scale 0--5 s anytime summary.}
\label{tab:large-auc}
\small
\begin{tabular}{lcc}
\toprule
Method & ARI AUC & ARI at 5 s \\
\midrule
DS-Sinkhorn & \textbf{0.7696} & \textbf{0.8186} \\
SMDSC & 0.7505 & 0.7516 \\
LoRD & 0.7055 & 0.7811 \\
B-LoRD & 0.6554 & 0.6859 \\
DCD & 0.5896 & 0.6095 \\
\bottomrule
\end{tabular}
\end{table}

\subsection{Anchor versus Nystr\"om reduced geometry}
The anchor construction is the main method because it satisfies the graph properties in Proposition~\ref{prop:anchor-laplacian} by construction. Nystr\"om can provide a richer low-rank global approximation and achieves higher mean ARI on several benchmark graphs, but its normalized approximation is more sensitive to landmarks and initialization. In a focused Blobs5000 study with ten initializations, the anchor operator has no ARI values below 0.5, whereas the tested Nystr\"om version fails on four of ten trials. Table~\ref{tab:anchor-nys} summarizes the tradeoff.

\begin{table*}[t]
\centering
\caption{Anchor versus Nystr\"om reduced operators under the fixed SMDSC configuration.}
\label{tab:anchor-nys}
\small
\begin{tabular}{llcccccc}
\toprule
Scale & Operator & ARI & NMI & State (MB) & Row resid. & Shared resid. & ARI rank \\
\midrule
Medium & Anchor & 0.7176 & \textbf{0.8457} & \textbf{0.186} & $4.2\!\times\!10^{-16}$ & $9.2\!\times\!10^{-13}$ & 1.80 \\
Medium & Nystr\"om & \textbf{0.7710} & 0.8375 & 0.432 & $4.6\!\times\!10^{-16}$ & $8.9\!\times\!10^{-13}$ & \textbf{1.20} \\
Large & Anchor & 0.7265 & \textbf{0.8723} & \textbf{0.918} & $4.4\!\times\!10^{-16}$ & $1.0\!\times\!10^{-11}$ & 1.67 \\
Large & Nystr\"om & \textbf{0.8408} & 0.8676 & 2.029 & $4.4\!\times\!10^{-16}$ & $1.5\!\times\!10^{-11}$ & \textbf{1.33} \\
\bottomrule
\end{tabular}
\end{table*}

For Blobs5000 over ten initializations, the anchor and Nystr\"om ARI means are 0.7474 and 0.6474, respectively; the corresponding standard deviations are 0.0888 and 0.2039. Nystr\"om has a higher median (0.7526 versus 0.7191) but a lower minimum (0.4259 versus 0.7189), illustrating a higher-ceiling/lower-stability regime. This motivates using anchor geometry in the theoretical main method while retaining Nystr\"om as an extension.

\subsection{Which factor should carry the manifold term?}
For a symmetric affinity and symmetric support, Proposition~\ref{prop:factor-swap} predicts that $V$-only and $U$-only regularization are the same problem after exchanging the factor names. The implementation confirms this exactly when the decoder is exchanged accordingly: their paired ARI values coincide to numerical precision. By contrast, the symmetric $U+V$ manifold model is a genuinely different model. Proposition~\ref{prop:symmetric-one-factor} shows that with $U_0=V_0$ it remains on a one-factor submanifold.

\begin{table}[t]
\centering
\caption{Manifold-side ablation. The symmetric scale $s=0.2$ was selected on independent development data.}
\label{tab:side-ablation}
\small
\begin{tabular}{llcc}
\toprule
Scale & Model & Decoder & ARI \\
\midrule
Medium & $V$-only & $V$ & \textbf{0.8314} \\
Medium & $U$-only & $U$ & \textbf{0.8314} \\
Medium & symmetric, $s=0.2$ & mean & 0.7404 \\
\midrule
Large & $V$-only & $V$ & \textbf{0.7678} \\
Large & $U$-only & $U$ & \textbf{0.7678} \\
Large & symmetric, $s=0.2$ & mean & 0.7151 \\
\bottomrule
\end{tabular}
\end{table}

\subsection{Projection and stationarity diagnostics}
The $(r-1)$-dimensional SPD Newton solver reproduces the previous root-based KL projection to approximately $2.4\times10^{-14}$ in the projected factors, while the shared-marginal residual is around $10^{-12}$ in the validation problem. Fig.~\ref{fig:stationarity} plots the stationarity measure in \eqref{eq:uv-stationarity-measure}. On the Varied1500 diagnostic run, all 60 accepted mirror steps decrease the original objective, and the DS/shared-marginal constraints remain at numerical precision. These diagnostics verify that the implemented line search matches the sufficient-decrease analysis rather than relying on a post-hoc feasibility correction.

\begin{figure}[t]
\centering
\includegraphics[width=\columnwidth]{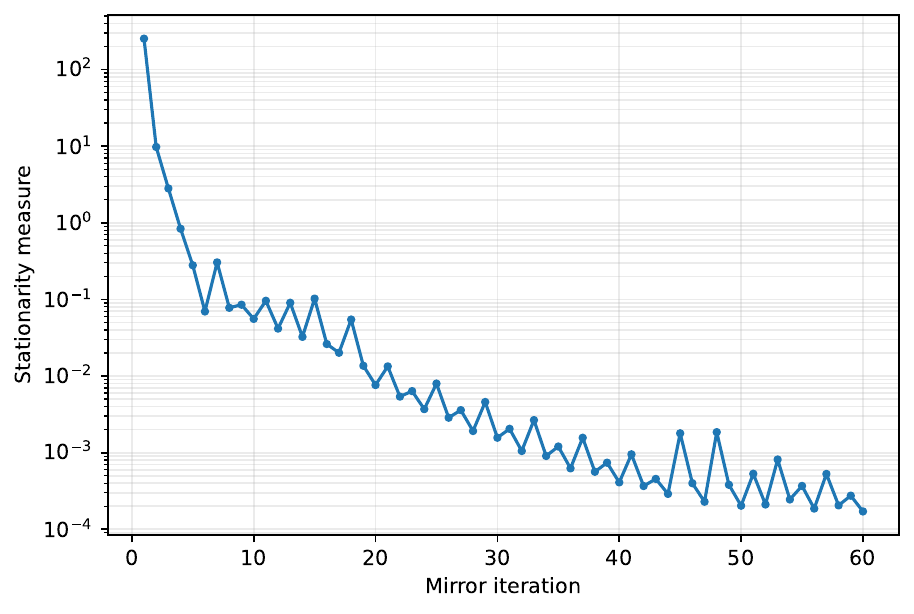}
\caption{Mirror-stationarity diagnostic on Varied1500 for the unified $U,V,g$ implementation.}
\label{fig:stationarity}
\end{figure}

\section{Discussion}
\label{sec:discussion}
The comparison in Tables~\ref{tab:position} and~\ref{tab:complexity-compare} clarifies where SMDSC differs from nearby methods. The advantage is not a uniformly smaller asymptotic order than every low-rank alternative. DCD and LoRD/B-LoRD also avoid a dense learned graph and can be cheaper per iteration. SMDSC instead targets a different combination of properties: two-factor freedom, exact DS feasibility at each accepted iterate, observed-edge sparse fitting, and an explicit reduced manifold term. The additional feasibility cost is confined to a rank-sized Newton system rather than an $n\times n$ constrained update.

This positioning also identifies when the method is most appropriate. If a symmetric one-factor representation is sufficient and manifold regularization is unnecessary, DCD or LoRD/B-LoRD may be simpler and computationally preferable. If the full learned graph itself is the object of interest and $n^2$ storage is acceptable, a direct model such as SDSGC preserves that representation explicitly. SMDSC is aimed at the intermediate regime in which the graph is too large to materialize, the two latent roles should not be tied, and exact stochastic balance is still required during optimization rather than only after it.

The asymmetric factorization deserves one further clarification. The model does not force $W$ to be symmetric, because enforcing $U=V$ would remove the very two-factor flexibility being introduced. For applications requiring a symmetric graph, $W_s=(W+W^\top)/2$ is still nonnegative and doubly stochastic, with rank at most $2r$. The theory and the main clustering decoder operate directly on the factor representation, so symmetrization is a downstream option rather than an optimization constraint.

The current convergence analysis has two explicit boundaries. First, Theorem~\ref{thm:uvg-stationarity} assumes that latent masses do not vanish. This is natural because $D_g^{-1}$ becomes singular as some $g_j\to0$; a vanishing component can instead be removed and interpreted as automatic rank reduction. Second, Corollary~\ref{cor:uvg-kkt} establishes the Euclidean KKT statement only for strictly positive accumulation points. Mirror stationarity remains valid globally over the positive iterates, while a complete boundary KKT result would require additional normal-cone analysis. These conditions should be viewed as the present scope of the theory rather than hidden regularity assumptions.

The anchor construction also represents a deliberate tradeoff. Nystr\"om reduction can attain higher clustering accuracy on favorable datasets, but the row-stochastic anchor operator is nonnegative, exactly stochastic, PSD, and admits a simple spectral bound and matrix-free action. The present method therefore chooses the anchor form for structural control and scalability, not because it dominates every low-rank approximation empirically. A natural extension is to design corrected Nystr\"om operators that preserve these graph properties while retaining their higher empirical ceiling.

Finally, the structural comparison suggests two broader directions. The shared-marginal polytope can support other sparse graph losses beyond squared observed-edge fitting, provided their gradients can be evaluated without forming $W$. Likewise, the low-dimensional KL projection is independent of the particular anchor regularizer. Robust, multi-view, semi-supervised, or adaptive-rank variants can therefore reuse the same feasibility mechanism while changing the data-fit or geometry term.

\section{Conclusion}
\label{sec:conclusion}
We derived an exact rank-space KL projection for shared-marginal low-rank factors with arbitrary positive row marginals of equal total mass. The complete nonlinear projection reduces to a strictly convex $(r-1)$-dimensional dual, with explicit covariance curvature and matrix-free Hessian--vector products. Doubly stochastic clustering is a direct specialization: two nonnegative row-simplex factors share a learned latent mass without being forced to coincide, while sparse observed-edge fitting and a stochastic anchor-reduced Laplacian avoid dense graph construction. Embedded in mirror descent, the projection yields feasible iterates, quantitative Bregman descent, an $O(1/N)$ mirror-stationarity guarantee, and KKT-stationary positive accumulation points under the stated mass condition.

The method does not claim to dominate every low-rank DS solver in raw per-iteration cost. Its contribution is the combination of exact DS structure, two-factor flexibility, sparse graph fitting, and reduced manifold geometry at linear-in-$n$ state for fixed rank, graph degree, and anchor sparsity. The empirical study therefore emphasizes matched structural comparisons and within-environment anytime behavior rather than software-dependent absolute speed claims.

\appendices

\ifdefined\SMDSCFULLPROOFS
\section{Proofs of the auxiliary structural propositions}
\label{app:structural-proofs}

\subsection{Proof of Proposition~\ref{prop:anchor-laplacian}}
\begin{proof}
Symmetry and entrywise nonnegativity follow directly from \eqref{eq:anchor-operator}. Also,
\[
S_a=(ZD_a^{-1/2})(ZD_a^{-1/2})^\top\succeq0.
\]
Since $Z^\top\mathbf1_n=d_a$ and $Z\mathbf1_m=\mathbf1_n$,
\[
S_a\mathbf1_n=ZD_a^{-1}d_a=Z\mathbf1_m=\mathbf1_n.
\]
Thus $S_a$ is symmetric, nonnegative, and stochastic, so its spectral radius equals one. Because $S_a\succeq0$, all eigenvalues lie in $[0,1]$. The claims for $L_a=I_n-S_a$ follow immediately. Finally, $\nabla\phi(V)=\mu L_aV$ and $\|L_a\|_2\le1$, proving the Lipschitz bound.
\end{proof}

\subsection{Proof of Proposition~\ref{prop:factor-swap}}
\begin{proof}
The feasible set is invariant under exchanging $U$ and $V$, and
\[
W(V,U,g)=VD_g^{-1}U^\top=W(U,V,g)^\top.
\]
Symmetry of $A$ and $\Omega$ gives $f_\Omega(W^\top;A)=f_\Omega(W;A)$. The manifold term on $V$ becomes exactly the manifold term on $U$ after the swap. Hence the objectives are identical under factor exchange. The statements about minima and stationary points follow because the swap is a linear bijection of the feasible set.
\end{proof}

\subsection{Proof of Proposition~\ref{prop:symmetric-one-factor}}
\begin{proof}
Suppose $U_k=V_k$. Then $W_k=W_k^\top$, and symmetry of $A$ and $\Omega$ implies $E_k=E_k^\top$. The two fitting gradients are therefore equal. The symmetric manifold regularizer contributes the same gradient to both factors, and the derivative through the shared mass is also identical. Thus $G_U(U_k,V_k)=G_V(U_k,V_k)$. The exponentiated mirror trials are identical. The KL projection problem introduced below is invariant under factor exchange and has a unique primal minimizer; therefore $U_{k+1}=V_{k+1}$. Induction proves the claim.
\end{proof}
\fi

\section{Proof of the exact shared-marginal KL projection theorem}
\label{app:kl-proof}
\begin{proof}[Proof of Theorem~\ref{thm:uv-low-dimensional-kl}]
Write $p_i(z)=u_i(z)/a_i$ and $q_\ell(z)=v_\ell(z)/b_\ell$. These are strictly positive probability vectors. The row-marginal identities in \eqref{eq:uv-softmax-general} are immediate. Differentiating \eqref{eq:uv-dual-potential} gives
\[
\nabla\psi_{a,b}(z)=\sum_i a_i p_i(z)-\sum_\ell b_\ell q_\ell(z)
=c_U(z)-c_V(z).
\]
For a softmax probability vector,
$\nabla p_i=\operatorname{Diag}(p_i)-p_ip_i^\top$, whereas differentiating $q_\ell$ with respect to $z$ gives the negative of the analogous covariance matrix. Hence
\begin{align*}
\nabla^2\psi_{a,b}
&=\sum_i a_i[\operatorname{Diag}(p_i)-p_ip_i^\top]\\
&\quad+\sum_\ell b_\ell[\operatorname{Diag}(q_\ell)-q_\ell q_\ell^\top].
\end{align*}
which is exactly \eqref{eq:uv-dual-hessian}.

For any $x\in\R^r$,
\begin{align*}
&x^\top[\operatorname{Diag}(c_U)-U^\top A_a^{-1}U]x\\
&\quad=\sum_i a_i\left[\sum_j p_{ij}x_j^2-
\left(\sum_j p_{ij}x_j\right)^2\right]\ge0.
\end{align*}
Each summand is a variance under a strictly positive categorical distribution and vanishes iff all coordinates of $x$ are equal. The same holds for the $V$ term, so the Hessian nullspace is exactly $\operatorname{span}\{\ones_r\}$.

Let $M=\ones_n^\top a=\ones_m^\top b$. Under $z\mapsto z+c\ones_r$, the first sum in \eqref{eq:uv-dual-potential} increases by $Mc$ and the second decreases by $Mc$, proving shift invariance. Fixing $z_r=0$ therefore removes the only null direction. For existence, let $z_{\max}=\max_jz_j$ and $z_{\min}=\min_jz_j$. Strict positivity of $\bar U,\bar V$ gives constants $C_U,C_V$ such that
\[
\psi_{a,b}(z)\ge M(z_{\max}-z_{\min})+C_U+C_V.
\]
On the gauge-fixed subspace, $\|z\|\to\infty$ implies $z_{\max}-z_{\min}\to\infty$, so $\psi_{a,b}$ is coercive and has a unique minimizer $z^\star$.

The feasible set of \eqref{eq:uv-kl-projection} contains a strictly positive point, for example
$U_{ij}=a_i/r$ and $V_{\ell j}=b_\ell/r$, whose shared column mass equals $(M/r)\ones_r$. The unique KL minimizer is also strictly positive: if an entry were zero, moving toward this strictly positive feasible point would give one-sided directional derivative $-\infty$ for that scalar KL term while all positive-coordinate derivatives remain finite, contradicting optimality.

Therefore logarithmic KKT equations are valid. With row multipliers $\alpha\in\R^n$, $\beta\in\R^m$, and shared-column multiplier $\lambda\in\R^r$,
\begin{align*}
\log(U/\bar U)+\alpha\ones_r^\top+\ones_n\lambda^\top&=0,\\
\log(V/\bar V)+\beta\ones_r^\top-\ones_m\lambda^\top&=0.
\end{align*}
Setting $z=-\lambda$, the prescribed row constraints recover exactly \eqref{eq:uv-softmax-general}. The remaining KKT condition is $c_U(z)=c_V(z)$, equivalently $\nabla\psi_{a,b}(z)=0$. Strict convexity of the primal KL objective gives uniqueness.
\end{proof}

\ifdefined\SMDSCFULLPROOFS
\section{Proof of the mirror fixed-point lemma}
\label{app:mirror-fixed-proof}
\begin{proof}[Proof of Lemma~\ref{lem:uv-mirror-fixed-point}]
If $\mathcal T_\eta(U,V)=(U,V)$, the first-order condition of \eqref{eq:uv-mirror-map} reduces to the stated normal-cone inclusion. Conversely, if the inclusion holds, then $(U,V)$ satisfies the first-order condition of the mirror subproblem. The KL term is strictly convex in its first argument, so the subproblem has a unique minimizer and the mirror map fixes $(U,V)$. The four KL terms in \eqref{eq:uv-mirror-residual} are nonnegative and vanish simultaneously exactly when $U^+=U$ and $V^+=V$.
\end{proof}

\section{Proofs of the convergence theorem and KKT corollary}
\label{app:convergence-proof}
\begin{proof}[Proof of Theorem~\ref{thm:uvg-stationarity}]
\textbf{Step 1: smoothness and a uniform safe mirror radius.}
For $a>0$, define
\begin{equation}
\mathcal C_a:=\{(U,V)\in\mathcal C:\min_j\bar g_j(U,V)\ge a\}.
\end{equation}
Because every row of $U$ and $V$ lies in the probability simplex, all entries belong to $[0,1]$; the additional lower-mass inequalities are closed and linear in $(U,V)$. Thus $\mathcal C_a$ is compact (and convex). On an open neighborhood of $\mathcal C_{\underline g/2}$, the denominators in $D_{\bar g}^{-1}$ are bounded away from zero. Hence $\Phi$ is twice continuously differentiable there, and compactness gives a finite bound on its Hessian. Therefore there exists a finite Lipschitz constant $L$ for the reduced gradient on this neighborhood, and
\begin{equation}
\label{eq:uv-euclidean-descent}
\begin{aligned}
\Phi(\widetilde U,\widetilde V)\le{}&\Phi(U,V)\\
&+\langle G_U,\widetilde U-U\rangle_F\\
&+\langle G_V,\widetilde V-V\rangle_F\\
&+\frac L2\|\widetilde U-U\|_F^2
+\frac L2\|\widetilde V-V\|_F^2.
\end{aligned}
\end{equation}
whenever the line segment joining the two pairs lies in that neighborhood; in particular this holds for pairs in $\mathcal C_{\underline g/2}$ after choosing the neighborhood around that convex compact set.

For positive simplex rows, every coordinate lies in $(0,1]$. Along the segment between two positive simplex rows, the Hessian of negative entropy is diagonal with entries at least one. Hence negative entropy is one-strongly convex in the Euclidean norm on the positive simplex, and therefore
\begin{equation}
\label{eq:uv-kl-strong-convexity}
\begin{aligned}
&D_{\rm KL}(\widetilde U\|U)+D_{\rm KL}(\widetilde V\|V)\\
&\qquad\ge\frac12\|\widetilde U-U\|_F^2
+\frac12\|\widetilde V-V\|_F^2.
\end{aligned}
\end{equation}
Combining \eqref{eq:uv-euclidean-descent} and \eqref{eq:uv-kl-strong-convexity} shows that the line-search condition is guaranteed whenever the candidate stays in $\mathcal C_{\underline g/2}$ and
\begin{equation}
\eta\le\frac{1-\sigma}{L}.
\end{equation}

It remains to show that sufficiently small mirror steps stay in that region. By compactness,
\[
M:=\max_{(U,V)\in\mathcal C_{\underline g}}
\sqrt{\|G_U(U,V)\|_F^2+\|G_V(U,V)\|_F^2}<\infty.
\]
Let $(U^+,V^+)=\mathcal T_\eta(U,V)$. Comparing the mirror subproblem at $(U^+,V^+)$ with the feasible point $(U,V)$ yields
\begin{align}
&D_{\rm KL}(U^+\|U)+D_{\rm KL}(V^+\|V)\\
&\qquad\le
-\eta\left[\langle G_U,U^+-U\rangle_F+\langle G_V,V^+-V\rangle_F\right]\\
&\qquad\le \eta M\sqrt{\|U^+-U\|_F^2+\|V^+-V\|_F^2}.
\end{align}
Using \eqref{eq:uv-kl-strong-convexity} gives
\begin{equation}
\sqrt{\|U^+-U\|_F^2+\|V^+-V\|_F^2}\le2\eta M.
\end{equation}
For every component $j$,
\begin{align}
|\bar g_j(U^+,V^+)-\bar g_j(U,V)|
&\le\sqrt{\frac n2}\,\Xi,\\
\Xi&:=\sqrt{\|U^+-U\|_F^2+\|V^+-V\|_F^2},\\
|\bar g_j(U^+,V^+)-\bar g_j(U,V)|
&\le \eta M\sqrt{2n}.
\end{align}
Thus the conservative condition
\begin{equation}
\eta\le\eta_{\rm mass}:=\frac{\underline g}{4M\sqrt n}
\end{equation}
(with $\eta_{\rm mass}=+\infty$ when $M=0$) implies
$\eta M\sqrt{2n}\le \underline g/2$ and therefore guarantees $(U^+,V^+)\in\mathcal C_{\underline g/2}$. Therefore every trial with
\begin{equation}
\eta\le\eta_\star:=\min\left\{\eta_{\rm mass},\frac{1-\sigma}{L}\right\}
\end{equation}
is acceptable. Since the line search starts from $\bar\eta$ and reduces by $\rho$, it terminates finitely and every accepted step obeys
\begin{equation}
\eta_k\ge\eta_{\min}:=\rho\min\{\bar\eta,\eta_\star\}>0.
\end{equation}

\textbf{Step 2: positivity and exact feasibility.}
If $(U_k,V_k)>0$, then the exponentiated references in \eqref{eq:uv-exp-trial} are strictly positive. Theorem~\ref{thm:uv-low-dimensional-kl} shows that their exact KL projection is also strictly positive and lies in $\mathcal C$. Positivity and exact feasibility therefore propagate by induction from $(U_0,V_0)$.

\textbf{Step 3: mirror optimality and strong Bregman decrease.}
The first-order variational inequality of the strictly convex mirror subproblem is
\begin{equation}
\label{eq:uv-mirror-vi}
\begin{aligned}
&\left\langle G_U^k+\frac1{\eta_k}(\log U_{k+1}-\log U_k),\widetilde U-U_{k+1}\right\rangle_F\\
&\quad+\left\langle G_V^k+\frac1{\eta_k}(\log V_{k+1}-\log V_k),\widetilde V-V_{k+1}\right\rangle_F\ge0
\end{aligned}
\end{equation}
for every $(\widetilde U,\widetilde V)\in\mathcal C$. Choosing $(\widetilde U,\widetilde V)=(U_k,V_k)$ and using the symmetrized Bregman identity gives
\begin{equation}
\label{eq:uv-sym-kl-optimality}
\begin{aligned}
&\langle G_U^k,U_{k+1}-U_k\rangle_F
+\langle G_V^k,V_{k+1}-V_k\rangle_F\\
&\le-\frac1{\eta_k}D_{\rm KL}(U_{k+1}\|U_k)
-\frac1{\eta_k}D_{\rm KL}(U_k\|U_{k+1})\\
&\quad-\frac1{\eta_k}D_{\rm KL}(V_{k+1}\|V_k)
-\frac1{\eta_k}D_{\rm KL}(V_k\|V_{k+1}).
\end{aligned}
\end{equation}
Substituting \eqref{eq:uv-sym-kl-optimality} into the accepted line-search condition \eqref{eq:uv-bregman-sd} proves \eqref{eq:uv-strong-decrease}.

\textbf{Step 4: summability and asymptotic regularity.}
The sparse squared fitting term and the anchor-Laplacian term are nonnegative, so $\Phi\ge0$. Hence $\{\Phi(U_k,V_k)\}$ is decreasing and convergent. Summing \eqref{eq:uv-strong-decrease} and using $\eta_k\le\bar\eta$ gives
\begin{equation}
\sum_{k=0}^{\infty}
\left[D_{\rm KL}(U_{k+1}\|U_k)+D_{\rm KL}(V_{k+1}\|V_k)\right]<\infty.
\end{equation}
Then \eqref{eq:uv-kl-strong-convexity} yields
\begin{equation}
\sum_{k=0}^{\infty}
\left(\|U_{k+1}-U_k\|_F^2+\|V_{k+1}-V_k\|_F^2\right)<\infty,
\end{equation}
which proves asymptotic regularity.

\textbf{Step 5: non-asymptotic mirror-stationarity rate.}
Since $\sigma<1$, \eqref{eq:uv-strong-decrease} implies
\begin{equation}
\Phi(U_k,V_k)-\Phi(U_{k+1},V_{k+1})
\ge \sigma\eta_k\Delta_k
\ge \sigma\eta_{\min}\Delta_k.
\end{equation}
Summing from $k=0$ to $N-1$ gives
\begin{equation}
\sigma\eta_{\min}\sum_{k=0}^{N-1}\Delta_k
\le\Phi(U_0,V_0)-\Phi(U_N,V_N)
\le\Phi(U_0,V_0)-\Phi_{\inf}.
\end{equation}
The average bounds the minimum, proving \eqref{eq:uv-stationarity-rate}. The same inequality also shows that $\sum_k\Delta_k<\infty$, hence $\Delta_k\to0$.
\end{proof}

\begin{proof}[Proof of Corollary~\ref{cor:uvg-kkt}]
Let $(U_{k_j},V_{k_j})\to(U^\star,V^\star)$ and suppose $U^\star,V^\star>0$. Asymptotic regularity gives $(U_{k_j+1},V_{k_j+1})\to(U^\star,V^\star)$. Because the limit is positive, all entries of these four factor matrices are uniformly bounded away from zero for sufficiently large $j$. The logarithm is therefore locally Lipschitz, and the step-size lower bound implies
\begin{align}
\frac1{\eta_{k_j}}\|\log U_{k_j+1}-\log U_{k_j}\|_F&\to0,\\
\frac1{\eta_{k_j}}\|\log V_{k_j+1}-\log V_{k_j}\|_F&\to0.
\end{align}
The variational inequality \eqref{eq:uv-mirror-vi} is equivalent to
\begin{equation}
\begin{aligned}
-&\Big(G_U^k+\eta_k^{-1}(\log U_{k+1}-\log U_k),\\
&\quad G_V^k+\eta_k^{-1}(\log V_{k+1}-\log V_k)\Big)\\
&\qquad\in N_{\mathcal C}(U_{k+1},V_{k+1}).
\end{aligned}
\end{equation}
The normal-cone mapping of a closed convex set has a closed graph. Passing to the subsequential limit and using continuity of $G_U,G_V$ proves \eqref{eq:uv-normal-stationarity}. Because the accumulation point is positive, the nonnegativity constraints are inactive there. Expanding the normal space generated by the row-sum and shared-column equalities gives \eqref{eq:uv-kkt-positive}.
\end{proof}
\fi

\end{document}